\theoremstyle{definition}
\newtheorem{example}{Example}
\def\spock{{$\textit{spock}$}}
\def\dwasp{{$\textit{DWASP}$}}
\newcommand{\tuple}[1]{\ensuremath{\langle{#1}\rangle}\xspace}
\newcommand{\eval}[3]{\ensuremath{[\![{#1}]\!]_{#2}^{#3}}\xspace}
\lstdefinelanguage{asp}{
    breakatwhitespace=true,
    captionpos=b,
    numbers=left,
    numbersep=5pt,
    numberblanklines=false,
    countblanklines=false,
    commentstyle=\colour{gray},
    frame=bt, framexbottommargin=5pt, framextopmargin=5pt,
    aboveskip=5pt, belowskip=5pt,
    abovecaptionskip=10pt
}
\def\naf{\ensuremath{\mathit{not}}\ \xspace}
\newcommand{\system}[1]{\ensuremath{\mathtt{#1}}\xspace}
\def\xasptwo{\system{xASP2}\xspace}
\def\xasp{\system{xASP}}
\def\xclingo{\system{xclingo}} 
\def\discasp{\system{DiscASP}} 
\def\expaspc{\system{exp(ASP^c)}}
\def\scasp{\system{s(CASP)}}  
\def\spock{\system{spock}}
\def\dwasp{\system{DWASP}}
\def\visualdlv{\system{Visual\text{-}DLV}}
\def\labas{\system{LABAS}}
\title{Explanations for Answer Set Programming}
\author{Mario Alviano
\institute{DEMACS, University of Calabria, \\ Via Bucci 30/B, 87036 Rende (CS), Italy}
\email{mario.alviano@unical.it}
\and
Ly Ly Trieu \& Tran Cao Son
\institute{New Mexico State Universty\\
NM, USA}
\email{\quad lytrieu|stran@nmsu.edu}
\and
Marcello Balduccini
\institute{Saint Joseph's University \\PA , USA}
\email{mbalducc@sju.edu}
}
\begin{document}
\maketitle

\begin{abstract}
The paper presents an enhancement of \xasp{}, a system that generates explanation graphs for Answer Set Programming (ASP). 
 Different from \xasp{}, the new system, \xasptwo,  supports different {\small \tt clingo} constructs like the choice rules, the constraints, and the aggregates such as $\#sum$, $\#min$.
This work formalizes and presents an explainable artificial intelligence system for a broad fragment of ASP, capable of shrinking as much as possible the set of assumptions and presenting explanations in terms of directed acyclic graphs.

\end{abstract}

\section{Introduction}\label{sec:introduction}

Recently, many modern artificial intelligence systems are increasingly capable of tackling complex problems. However, their lack of transparency can create a new issue: users may not comprehend why a solution was obtained, making it difficult to trust the results. 
Moreover, with the \emph{right to an explanation} law extensively discussed in the USA, EU, and UK, and partly enacted in some countries, explainable artificial intelligence (XAI) has experienced a substantial increase in interest.
Thus, the focus of this paper is on the development of an XAI system for Answer Set Programming (ASP) \cite{MarekT99,Niemela99}.
Answer Set Programming (ASP) \cite{MarekT99,Niemela99} is a well-known paradigm for problem-solving using logic programs under answer set semantics \cite{GelfondL90} in knowledge representation and reasoning (KR\&R) and an extension of Datalog with a strong connection with well-founded semantics \cite{DBLP:journals/tplp/PelovDB07}.
A variety of applications such as planning, diagnosis, etc, have been successfully implemented using it.
In this paper, our goal is to provide an answer to the question  ``\emph{given an answer set $A$ of a program $\Pi$ and an atom $\alpha$, why an atom $\alpha$ is true (or false) in $A$?}''. 

The emergence of XAI has brought significant attention from researchers in ASP community, resulting in numerous proposed systems aimed at addressing this issue such as xclingo{} \cite{Cabalar2020}, \discasp{} \cite{li2021discasp}, \xasp{} \cite{trieu2022explanation}, \expaspc \cite{trieu2021exp}. However, these systems are incapable of handling one or more of the following scenarios: (\emph{i}) false atoms can be explained by the system, (\emph{ii}) the ability to support certain advanced language features.
In this paper, we proposed an improvement system, called \xasptwo, that takes inspiration from the approach used in \xasp{} \cite{trieu2022explanation} and \cite{pontelli2009justifications}. \xasptwo are able to substantially increase scalability and breadth of supported language features while producing explanation graphs with more immediately and consistently useful to users.
The improvement presented here deals with two main issues in explaining the assignment of $\alpha$ in $A$:
(\emph{i}) how to compute a minimum cardinality set of atoms that is assumed to be false such that it is capable of explaining the assignment of $\alpha$ in $A$; and 
(\emph{ii}) how to support sophisticated linguistic constructs such as choice rules and aggregates, which can be involved to explain the falsity of some atoms in easily understandable terms. 

Our main contributions are the following:
\begin{itemize}
\item
A notion of explanation in terms of directed acyclic graphs explains why an atom is (or is not) in an answer set in terms of easy-to-understand inferences originating from a hopefully minimum set of assumed false atoms  (Section~\ref{sec:explanations}). Note that the explanation graph of an atom is restricted to the atoms involved in the relevant rules for the explaining atoms. 

\item 
A proof of existence for the explanations according to the given definition that guarantees the correctness of our implementation (Section~\ref{sec:correctness}).

\item
The implementation of an enhancement system, \xasptwo, for producing explanations powered by ASP and its empirical evaluation(Sections~\ref{sec:meta-programming}--\ref{sec:experiment}).  \xasptwo tackles logic programs with different {\small \tt clingo} constructs such as aggregates and constraints.
\end{itemize}
The supported fragment of ASP includes uninterpreted function symbols, common aggregation functions, comparison expressions, strong negation, constraints, normal rules, and choice rules.
Aggregates are expected to be stratified, to not involve default negation, and to have a single atomic condition.
Choice rules are expected to be unconditional, or otherwise to have exactly one conditional atom with a self-explanatory condition (as for example a range expression or an extensional predicate).
Additionally, to ease the presentation, in Section~\ref{sec:background} we only consider \emph{sum} aggregates, and completely omit uninterpreted function symbols, comparison expressions, strong negation, and conditions in choice rules.
To the best of our knowledge, this is the first explanation generation system that supports different {\small \tt clingo} constructs such as aggregates and constraints.

\section{Background}\label{sec:background}

All sets and sequences considered in this paper are finite.
Let $\mathbf{P}$, $\mathbf{C}$, $\mathbf{V}$ be fixed nonempty sets of \emph{predicate names}, \emph{constants} and \emph{variables}.
Predicates are associated with an \emph{arity}, a non-negative integer.
A \emph{term} is any element in $\mathbf{C} \cup \mathbf{V}$.
An \emph{atom} is of the form $p(\overline{t})$, where $p \in \mathbf{P}$, and $\overline{t}$ is a possibly empty sequence of terms.
A \emph{literal} is an atom possibly preceded by the default negation symbol $\naf\!\!$;
they are referred to as positive and negative literals.

An \emph{aggregate} is of the form
\begin{equation}\label{eq:aggregate}
    \mathit{sum}\{t_a, \overline{t'}: p(\overline{t})\} \odot t_g
\end{equation}
where
$\odot$ is a binary comparison operator, $p \in \mathbf{P}$, $\overline{t}$ and $\overline{t'}$ are possibly empty sequences of terms, and $t_a$ and $t_g$ are terms. 

A \emph{choice} is of the form
\begin{equation}\label{eq:choice}
    t_1 \leq \{\mathit{atoms}\} \leq t_2
\end{equation}
where $\mathit{atoms}$ is a possibly empty sequence of atoms, and $t_1,t_2$ are terms.
Let $\bot$ be syntactic sugar for $1 \leq \{\} \leq 1$.

A \emph{rule} is of the form
\begin{equation}\label{eq:rule}
    \mathit{head} \leftarrow \mathit{body}
\end{equation}
where $\mathit{head}$ is an atom or a choice, and $\mathit{body}$ is a possibly empty sequence of literals and aggregates.
For a rule $r$, let $H(r)$ denote the atom or choice in the head of $r$;
let $B^\Sigma(r)$, $B^+(r)$ and $B^-(r)$ denote the sets of aggregates, positive and negative literals in the body of $r$;
let $B(r)$ denote the set $B^\Sigma(r) \cup B^+(r) \cup B^-(r)$.

A variable $X$ occurring in $B^+(r)$ is a \emph{global variable}.
Other variables occurring among the terms $\overline{t}$ of some aggregate in $B^\Sigma(r)$ of the form \eqref{eq:aggregate} are \emph{local variables}.
And any other variable occurring in $r$ is an \emph{unsafe variable}.
A \emph{safe rule} is a rule with no \emph{unsafe variables}.
A \emph{program} $\Pi$ is a set of safe rules.
Additionally, we assume that aggregates are stratified, that is, the \emph{dependency graph} $\mathcal{G}_\Pi$ having a vertex for each predicate occurring in $\Pi$ and an edge $pq$ whenever there is $r \in \Pi$ with $p$ occurring in $H(r)$ and $q$ occurring in $B^+(r)$ or $B^\Sigma(r)$ is acyclic.

\begin{example}\label{ex:running:1}
Given a connected undirected graph $G$ encoded by predicate $\mathit{edge/2}$, source and sink nodes encoded by predicates $\mathit{source/1}$ and $\mathit{sink/1}$, the following program assigns a direction to each edge so that source nodes can still reach all sink nodes:
\begin{align}
    \label{eq:rule:run:1}
    & 1 \leq \{\mathit{arc}(X,Y);\ \mathit{arc}(Y,X)\} \leq 1 \leftarrow \mathit{edge}(X,Y)\\
    \label{eq:rule:run:2}
    & \mathit{reach}(X,X) \leftarrow \mathit{source}(X)\\
    \label{eq:rule:run:3} 
    & \mathit{reach}(X,Y) \leftarrow \mathit{reach}(X,Z),\ \mathit{arc}(Z,Y)\\
    \label{eq:rule:run:4} 
    & \bot \leftarrow \mathit{source}(X),\ \mathit{sink}(Y),\ \naf \mathit{reach}(X,Y)
\end{align}
If failures on the reachability condition are permitted up to a given threshold encoded by predicate $\mathit{threshold/1}$, the program comprising rules \eqref{eq:rule:run:1}--\eqref{eq:rule:run:3} and
\begin{align}
    \label{eq:rule:run:5}
    & \mathit{fail}(X,Y) \leftarrow \mathit{source}(X),\ \mathit{sink}(Y),\ \naf\mathit{reach}(X,Y)\\
    \label{eq:rule:run:6}
    & \bot \leftarrow \mathit{threshold}(T), \mathit{sum}\{1,X,Y : \mathit{fail}(X,Y)\} > T
\end{align}
can be used.
Note that $X$ and $Y$ are local variables in rule \eqref{eq:rule:run:6}, and all other variables are global.
\hfill$\blacksquare$
\end{example}

A substitution $\sigma$ is a partial function from variables to constants;
the application of $\sigma$ to an expression $E$ is denoted by $E\sigma$.
Let $\mathit{instantiate}(\Pi)$ be the program obtained from rules of $\Pi$ by substituting global variables with constants in $\mathbf{C}$, in all possible ways;
note that local variables are still present in $\mathit{instantiate}(\Pi)$.
The Herbrand base of $\Pi$, denoted $\mathit{base}(\Pi)$, is the set of ground atoms (i.e., atoms with no variables) occurring in $\mathit{instantiate}(\Pi)$.

\begin{figure}[t]
    \centering
    \begin{tikzpicture}
        \node[shape=circle,draw=black,fill=blue,fill opacity=0.7,text=white,text opacity=1] (B) at (0,0) {\bf b};
        \node[shape=circle,draw=black,fill=blue,fill opacity=0.7,text=white,text opacity=1] (A) at (2,0) {\bf a};
        \node[shape=circle,draw=black] (D) at (4,0) {\bf d};
        \node[shape=circle,draw=black,fill=red,fill opacity=0.7,text opacity=1] (C) at (6,0) {\bf c};
    
        \path [-] (A) edge (B);
        \path [-] (A) edge (D);
        \path [-] (D) edge (C);
    \end{tikzpicture}
    \caption{
        The undirected graph used as running example.
        Source vertices in blue, sink vertex in red.
    }\label{fig:running-graph}
\end{figure}
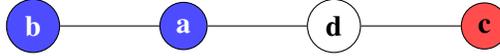

\begin{example}\label{ex:running:2}
Let $\Pi_\mathit{run}$ comprise rules \eqref{eq:rule:run:1}--\eqref{eq:rule:run:3}, \eqref{eq:rule:run:5}--\eqref{eq:rule:run:6} and the facts (i.e., rules with an empty body)
$\mathit{edge}(a,b)$,
$\mathit{edge}(a,d)$,
$\mathit{edge}(d,c)$,
$\mathit{source}(a)$,
$\mathit{source}(b)$,
$\mathit{sink}(c)$, and
$\mathit{threshold}(0)$
(see Figure~\ref{fig:running-graph}).
Hence, $\mathit{instantiate}\\(\Pi_\mathit{run})$ contains, among others, the rules
\begin{align*}
    & 1 \leq \{\mathit{arc}(a,b);\ \mathit{arc}(b,a)\} \leq 1 \leftarrow \mathit{edge}(a,b)\\
    & \bot \leftarrow \mathit{threshold}(0), \mathit{sum}\{1,X,Y : \mathit{fail}(X,Y)\} > 0
\end{align*}
and $\mathit{base}(\Pi_\mathit{run})$ contains
$\mathit{fail}(a,c)$,
$\mathit{fail}(b,c)$, and so on.
\hfill$\blacksquare$
\end{example}

A \emph{(two-valued) interpretation} is a set of ground atoms.
For a two-valued interpretation $I$, relation $I \models \cdot$ is defined as follows:
for a ground atom $p(\overline{c})$, $I \models p(\overline{c})$ if $p(\overline{c}) \in I$, and $I \models \naf p(\overline{c})$ if $p(\overline{c}) \notin I$;
for an aggregate $\alpha$ of the form \eqref{eq:aggregate}, 
the aggregate set of $\alpha$ w.r.t.\ $I$, denoted $\mathit{aggset}(\alpha,I)$, is 
$\{\tuple{t_a,\overline{t'}}\sigma \mid p(\overline{t})\sigma \in I,$ for some substitution $\sigma\}$, and
$I \models \alpha$ if 
$(\sum_{\tuple{c_a,\overline{c'}} \in \mathit{aggset}(\alpha,I)}{c_a}) \odot t_g$ is a true expression over integers;
for a choice $\alpha$ of the form \eqref{eq:choice},
$I \models \alpha$ if \mbox{$t_1 \leq |I \cap \mathit{atoms}| \leq t_2$} is a true expression over integers;
for a rule $r$ with no global variables,
$I \models B(r)$ if $I \models \alpha$ for all $\alpha \in B(r)$, and
$I \models r$ if $I \models H(r)$ whenever $I \models B(r)$;
for a program $\Pi$, $I \models \Pi$ if $I \models r$ for all $r \in \mathit{instantiate}(\Pi)$.

For a rule $r$ of the form \eqref{eq:rule} and an interpretation $I$, let $\mathit{expand}(r,I)$ be the set $\{p(\overline{c}) \leftarrow \mathit{body} \mid p(\overline{c}) \in I$ occurs in $H(r)\}$.
The \emph{reduct} of $\Pi$ w.r.t.\ $I$ is the program comprising the expanded rules of $\mathit{instantiate}(\Pi)$ whose body is true w.r.t.\ $I$, that is,
$\mathit{reduct}(\Pi,I) := \bigcup_{r \in \mathit{instantiate}(Pi),\ I \models B(r)}{\mathit{expand}(r,I)}$.
An \emph{answer set} of $\Pi$ is an interpretation $A$ such that $A \models \Pi$ and no $I \subset A$ satisfies $I \models \mathit{reduct}(\Pi,A)$.

\begin{example}\label{ex:answer-set}
The only answer set $A_\mathit{run}$ of program $\Pi_\mathit{run}$ contains, among others, the atoms
$\mathit{arc}(b,a)$,
$\mathit{arc}(a,d)$,
$\mathit{arc}(d,c)$,
no other instance of $\mathit{arc/2}$,
and no instance of $\mathit{fail/2}$.
Hence, $A_\mathit{run} \models 1 \leq \{\mathit{arc}(a,b); \\ \mathit{arc}(b,a)\} \leq 1$ and
$A_\mathit{run} \not\models \mathit{sum}\{1,X,Y : \mathit{fail}(X,Y)\} > 0$.
\hfill$\blacksquare$
\end{example}

A \emph{three-valued interpretation} is a pair $(L,U)$, where $L,U$ are sets of ground atoms such that $L \subseteq U$;
sets $L$ and $U$, also denoted $(L,U)_1$ and $(L,U)_2$, are the lower and upper bounds on the true atoms, so
atoms in $L$ are true, atoms in $U \setminus L$ are undefined, and all other atoms are false.
The \emph{evaluation function} $\eval{\cdot}{L}{U}$ associates literals and aggregates 
with a truth value among $\mathbf{u}$, $\mathbf{t}$ and $\mathbf{f}$ as follows:
$\eval{\alpha}{L}{U} = \mathbf{u}$ if $\alpha$ is a literal whose atom is $p(\overline{c})$ and $p(\overline{c}) \in U \setminus L$, or
$\alpha$ is an aggregate of the form \eqref{eq:aggregate} and $\mathit{aggset}(\alpha,U \setminus L) \neq \emptyset$, or
$\alpha$ is a choice of the form \eqref{eq:choice} and $(U \setminus L) \cap \mathit{atoms} \neq \emptyset$;
$\eval{\alpha}{L}{U} = \mathbf{t}$ if $\eval{\alpha}{L}{U} \neq \mathbf{u}$ and $L \models \alpha$; and 
$\eval{\alpha}{L}{U} = \mathbf{f}$ if $\eval{\alpha}{L}{U} \neq \mathbf{u}$ and $L \not\models \alpha$.
The evaluation function extends to rule bodies as follows:
$\eval{B(r)}{L}{U} = \mathbf{f}$ if there is $\alpha \in B(r)$ such that $\eval{\alpha}{L}{U} = \mathbf{f}$;
$\eval{B(r)}{L}{U} = \mathbf{t}$ if $\eval{\alpha}{L}{U} = \mathbf{t}$ for all $\alpha \in B(r)$;
otherwise $\eval{B(r)}{L}{U} = \mathbf{u}$.

\begin{example}
For $\alpha$ being $\mathit{sum}\{1,X,Y : \mathit{fail}(X,Y)\} > 0$,
$\eval{\alpha}{\emptyset}{\{\mathit{fail}(a,c)\}} = \mathbf{u}$,
$\eval{\alpha}{\{\mathit{fail}(a,c)\}}{\{\mathit{fail}(a,c)\}} = \mathbf{t}$, and
$\eval{\alpha}{\emptyset}{\emptyset} = \mathbf{f}$.
\hfill$\blacksquare$
\end{example}

Mainstream ASP systems compute answer sets of a given program $\Pi$ by applying several inference rules on (a subset of) $\mathit{instantiate}(\Pi)$, the most relevant ones for this work summarized below.
Let $(L,U)$ be a three-valued interpretation, and $p(\overline{c})$ be a ground atom such that $\eval{p(\overline{c})}{L}{U} = \mathbf{u}$.
Atom $p(\overline{c})$ in $H(r)$ is \emph{inferred true by support} if \mbox{$\eval{B(r)}{L}{U} = \mathbf{t}$}. 
(Actually, if $H(r)$ is a choice of the form \eqref{eq:choice}, inference by support additionally requires that $|\mathit{atoms} \cap U| = t_1$, that is, undefined atoms in $\mathit{atoms} \cap U$ are required to reach the bound $t_1$.
Such extra condition is not relevant for our work, and will not be used, because our explanations aim at associating true atoms with rules with true bodies.)
Atom $p(\overline{c})$ is \emph{inferred false by lack of support} if each rule  $r \in \mathit{instantiate}(\Pi)$ with $p(\overline{c})$ occurring in $H(r)$ is such that $\eval{B(r)}{L}{U} = \mathbf{f}$.
Atom $p(\overline{c})$ is \emph{inferred false by a constraint-like rule} $r \in \mathit{instantiate}(\Pi)$ if $p(\overline{c}) \in B^+(r)$, $\eval{H(r)}{L}{U} = \mathbf{f}$ and $\eval{B(r) \setminus \{p(\overline{c})\}}{L}{U} = \mathbf{t}$.
Atom $p(\overline{c})$ is \emph{inferred false by a choice rule} $r \in \mathit{instantiate}(\Pi)$ if $H(r)$ has the form \eqref{eq:choice}, $p(\overline{c}) \in \mathit{atoms}$, $|\mathit{atoms} \cap L| \geq t_2$ and $\eval{B(r)}{L}{U} = \mathbf{t}$.
Atom $p(\overline{c})$ is \emph{inferred false by well-founded computation} if it belongs to some \emph{unfounded set} $X$ for $\Pi$ w.r.t.\ $(L,U)$, that is, a set $X$ such that
for all rules $r \in \mathit{instantiate}(\Pi)$ at least one of the following conditions holds:
(i) no atom from $X$ occurs in $H(r)$;
(ii) $\eval{B(r)}{L}{U} = \mathbf{f}$;
(iii) $B^+(r) \cap X \neq \emptyset$.

\begin{example}
Given the program $\mathit{instantiate}(\Pi_\mathit{run})$, and the three-valued interpretation $\left(\emptyset,\mathit{base}(\Pi_\mathit{run})\right)$, 
atom $\mathit{edge}(a,a)$ is inferred false by lack of support,
atom $\mathit{source}(a)$ is inferred true by support, and
the set $\{\mathit{edge}(a,a),$ $\mathit{arc}(a,a)\}$ is unfounded.
Given $\left(\{\mathit{arc}(d,c)\},\mathit{base}(\Pi_\mathit{run}) \setminus\{\mathit{reach}(a,c)\}\right)$, atom $\mathit{reach}(a,d)$ is inferred false by the constraint-like rule \eqref{eq:rule:run:3}, and
$\mathit{arc}(c,d)$ is inferred false by the choice rule \eqref{eq:rule:run:1}.
\hfill$\blacksquare$
\end{example}

\section{Explanations}\label{sec:explanations}
Let $\Pi$ be a program, and $A$ be one of its answer sets.
A \emph{well-founded derivation} for $\Pi$ w.r.t.\ $A$, denoted $\mathit{wf}(\Pi,A)$, is obtained from the interpretation $(\emptyset,\mathit{base}(\Pi))$ by iteratively 
(i) adding to its lower bound atoms of $A$ that are inferred true by support, and 
(ii) removing from its upper bound atoms belonging to some unfounded set.
Note that $\mathit{wf}(\Pi,A)$ is computed as a preprocessing step. 

\begin{example}
Given $\Pi_\mathit{run}$ and $A_\mathit{run}$ from Examples~\ref{ex:running:2}--\ref{ex:answer-set},
the lower bound of $\mathit{wf}(\Pi_\mathit{run},A_\mathit{run})$ contains head atoms in Example~\ref{ex:running:2},
$\mathit{arc}(b,a)$,
$\mathit{arc}(a,d)$,
$\mathit{arc}(d,c)$,
$\mathit{reach}(a,a)$,
$\mathit{reach}(b,b)$,
$\mathit{reach}(a,d)$,
$\mathit{reach}(a,c)$,
$\mathit{reach}(b,a)$,
$\mathit{reach}(b,c)$, and
$\mathit{reach}(b,d)$.
The upper bound 
additionally contains
$\mathit{arc}(a,b)$,
$\mathit{arc}(d,a)$,
$\mathit{arc}(c,d)$,
and several instances of $\mathit{reach/2}$ and $\mathit{fail/2}$.
\hfill$\blacksquare$
\end{example}

An \emph{explaining derivation} for $\Pi$ and $A$ from $(L,U)$ is obtained by iteratively
(i) adding to $L$ atoms of $A$ that are inferred true by support, and
(ii) removing from $U$ atoms that are inferred false by lack of support, constraint-like rules and choice rules.
An \emph{assumption set} for $\Pi$ and $A$ is a set $X \subseteq \mathit{base}(\Pi) \setminus A$ of ground atoms such that
the explaining derivation for $\Pi$ and $A$ from $(\emptyset, \mathit{wf}(\Pi,A)_2 \setminus X)$ terminates with $A$ (in words, $A$ is reconstructed from the false atoms of the well-founded derivation extended with $X$).
Let $\mathit{AS}(\Pi,A)$ be the set of assumption sets for $\Pi$ and $A$.
A \emph{minimal assumption set} for $\Pi$, $A$ and a ground atom $\alpha$ is a set $X \in \mathit{AS}(\Pi,A)$ such that $X' \subset X$ implies $X' \notin \mathit{AS}(\Pi,A)$, and $\alpha \in X$ implies $\alpha \in X'$ for all $X' \in \mathit{AS}(\Pi,A)$.
(In other words, we prefer assumption sets not including the atom to explain.
When all assumption sets include the atom to explain, we opt for the singleton comprising the atom to explain alone.)
Let $\mathit{MAS}(\Pi,A,\alpha)$ be the set of minimal assumption sets for $\Pi$, $A$ and $\alpha$.

\begin{example}
Set $\mathit{base}(\Pi_\mathit{run}) \setminus A_\mathit{run}$ is an assumption set for $\Pi_\mathit{run}$ and its answer set $A_\mathit{run}$.
It can be checked that also $\emptyset \in \mathit{AS}(\Pi_\mathit{run},A_\mathit{run},\alpha)$, and it is indeed the only minimal assumption set in this case, for any atom in $\mathit{base}(\Pi_\mathit{run})$.
\hfill$\blacksquare$
\end{example}

Given an assumption set $X$ and an explaining derivation from $(\emptyset, \mathit{wf}(\Pi,A)_2 \setminus X)$, a directed acyclic graph (DAG) can be obtained as follows:
The vertices of the graph are 
the atoms in $\mathit{base}(\Pi)$ and the aggregates occurring in $\mathit{instantiate}(\Pi)$.
(The vertex $p(\overline{c})$ is also referred to as $\naf p(\overline{c})$.)
Any aggregate of the form \eqref{eq:aggregate} is linked to instances of $\mathit{p(\overline{t})}$.
Atoms inferred true by support due to a rule $r \in \mathit{instantiate}(\Pi)$ 
are linked to elements of $B(r)$.
Any atom $\alpha$ inferred false by lack of support is linked to an element of $B(r)$ that is inferred false before $\alpha$, for each rule $r \in \mathit{instantiate}(\Pi)$ such that $\alpha$ occurs in $H(r)$.
Any atom $\alpha$ inferred false by a constraint-like rule $r \in \mathit{instantiate}(\Pi)$ is linked to the atoms occurring in $H(r)$ and the elements of $B(r) \setminus \{\alpha\}$.
Any atom $\alpha$ inferred false by a choice rule $r \in \mathit{instantiate}(\Pi)$ is linked to the atoms occurring in $H(r)$ that are true in $A$, and to the elements of $B(r)$.
A portion of an example DAG is reported in Figure~\ref{fig:dag-run}.

\begin{figure}
    \centering
    \includegraphics[width=.6\linewidth]{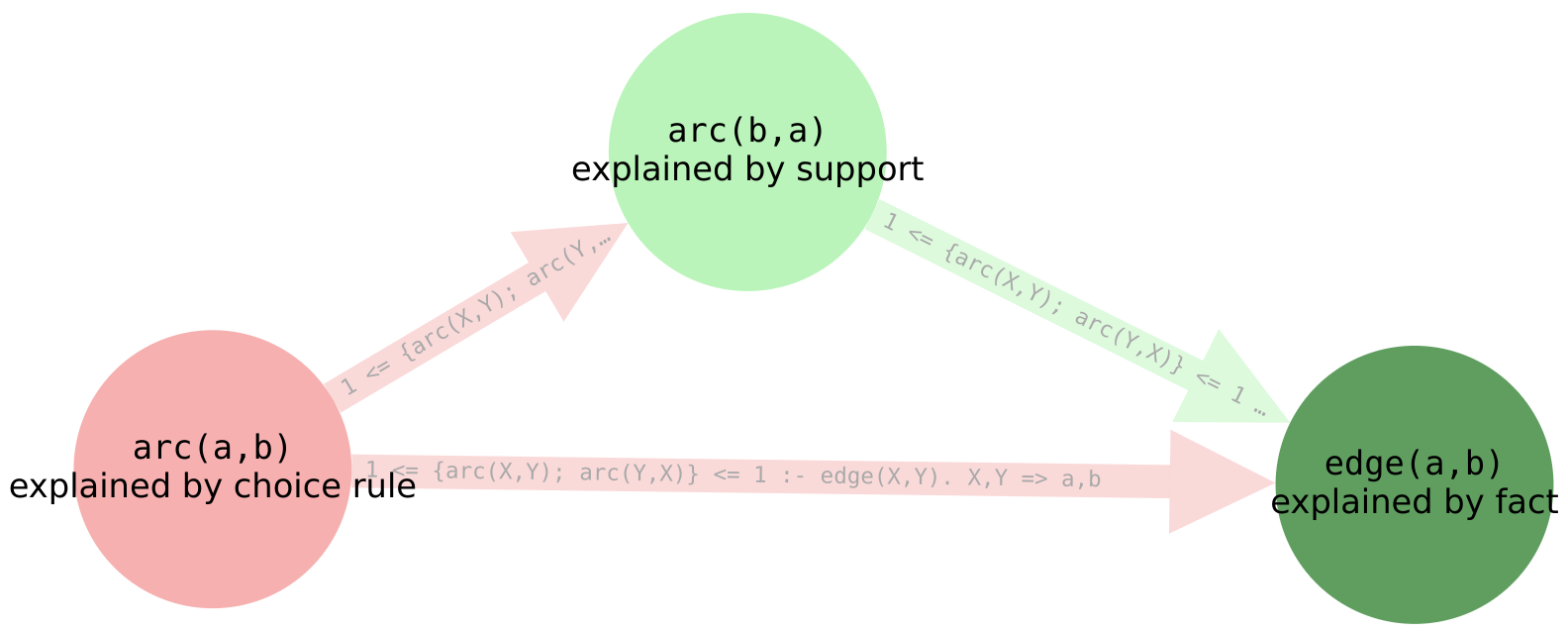}
    \caption{
        Induced DAG on the vertices reachable from $\mathit{arc(a,b)}$ for the minimal assumption set $\emptyset$ for $\Pi_\mathit{run}$.
    }\label{fig:dag-run}
\end{figure}

\section{Existence of Minimal Assumption Sets}\label{sec:correctness}

This section is devoted to formally show that the existence of minimal assumption sets is guaranteed, and so are DAGs as defined in Section~\ref{sec:explanations}.

\begin{restatable}[Main Theorem]{theorem}{MainTheorem}\label{thm:existence}
Let $\Pi$ be a program, $A$ one of its answer sets, and $\alpha$ a ground atom in $\mathit{base}(\Pi)$.
Set $\mathit{MAS}(\Pi,A,\alpha)$ is nonempty.
\end{restatable}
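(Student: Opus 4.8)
The plan is to split the statement into two independent parts: first, that there is at least one assumption set at all, i.e.\ $\mathit{AS}(\Pi,A) \neq \emptyset$; and second, a purely order-theoretic selection argument that extracts from $\mathit{AS}(\Pi,A)$ an element meeting the two requirements defining $\mathit{MAS}(\Pi,A,\alpha)$, namely minimality under $\subseteq$ together with the preference for omitting $\alpha$. The natural witness for the first part is the largest conceivable assumption set, the whole complement $X_\top := \mathit{base}(\Pi) \setminus A$, which the running example already suggests is always admissible.

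First I would show $X_\top \in \mathit{AS}(\Pi,A)$. The explaining derivation for this $X_\top$ starts from $(\emptyset, \mathit{wf}(\Pi,A)_2 \setminus X_\top)$. Using the standard property that the well-founded construction underapproximates every answer set, so that $A \subseteq \mathit{wf}(\Pi,A)_2$, and that the upper bound satisfies $\mathit{wf}(\Pi,A)_2 \subseteq \mathit{base}(\Pi)$, the starting upper bound simplifies to $\mathit{wf}(\Pi,A)_2 \cap A = A$; hence the derivation starts from $(\emptyset, A)$, in which every atom outside $A$ is already false and every atom of $A$ is undefined. It then remains to argue that iterating the inference rules to a fixpoint yields $(A,A)$. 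The key is that $A$ coincides with the least model of $\mathit{reduct}(\Pi,A)$, so that the inferred-true-by-support rule, driven with upper bound pinned at $A$, reconstructs all of $A$ in the lower bound; stratification of aggregates is what guarantees that this monotone reconstruction is well defined and reaches every atom of $A$. Symmetrically, one checks that no atom of $A$ is ever erroneously deleted from the upper bound: an $A$-atom has a supporting rule whose body is true in $A$ and therefore never evaluates to $\mathbf{f}$ under any $(L,U)$ with $L \subseteq A \subseteq U$, which rules out falsification by lack of support, while falsification by a constraint-like rule or by a choice rule would directly contradict $A \models \mathit{instantiate}(\Pi)$. Consequently the fixpoint is exactly $(A,A)$ and $X_\top$ is an assumption set.

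Next I would turn nonemptiness of $\mathit{AS}(\Pi,A)$ into nonemptiness of $\mathit{MAS}(\Pi,A,\alpha)$. Since all sets are finite, $\mathit{AS}(\Pi,A)$ is a finite nonempty family of subsets of $\mathit{base}(\Pi) \setminus A$, so every nonempty subfamily has a $\subseteq$-minimal element. I would then split on whether some assumption set avoids $\alpha$. If the subfamily $\mathcal{F} := \{Y \in \mathit{AS}(\Pi,A) : \alpha \notin Y\}$ is nonempty, pick a $\subseteq$-minimal $X \in \mathcal{F}$; any $Z \subsetneq X$ in $\mathit{AS}(\Pi,A)$ would also avoid $\alpha$ and hence lie in $\mathcal{F}$, contradicting minimality, so $X$ is in fact $\subseteq$-minimal in all of $\mathit{AS}(\Pi,A)$, and since $\alpha \notin X$ the preference condition holds vacuously. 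If instead $\mathcal{F} = \emptyset$, then $\alpha$ belongs to every assumption set, the preference condition is satisfied by any set, and any $\subseteq$-minimal element of $\mathit{AS}(\Pi,A)$ works. In either case the chosen $X$ lies in $\mathit{MAS}(\Pi,A,\alpha)$.

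I expect the first part to be the real obstacle: establishing that the explaining derivation reconstructs \emph{exactly} $A$. This has two halves — completeness, that support alone recovers every true atom, which I would reduce to the least-model-of-reduct characterization of answer sets while leaning on stratification to handle aggregates; and soundness, that the three falsification rules never strip an atom of $A$ from the upper bound, which I would reduce to $A \models \mathit{instantiate}(\Pi)$ together with the observation that a choice satisfied by $A$ cannot already be saturated below an $A$-atom. The second part is routine finite-poset reasoning and should present no difficulty.
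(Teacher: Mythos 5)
Your proposal is correct and follows essentially the same route as the paper: the paper's Lemma~\ref{lem:big-us} establishes exactly your first part (that $\mathit{base}(\Pi)\setminus A$ is an assumption set, via the soundness property of Lemma~\ref{lem:approacing-answer-set} plus a completeness argument using the least-model-of-reduct characterization and stratification of aggregates), and the main theorem then follows by the same finite minimal-element selection. Your write-up is in fact slightly more explicit than the paper's on two points --- the simplification of the starting interpretation to $(\emptyset, A)$ and the case split on whether some assumption set avoids $\alpha$ --- but these are refinements of the same argument, not a different one.
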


To prove the above theorem, we introduce some additional notation and claims.
Let $\Pi$ be a program, and $(L,U)$ be a three-valued interpretation.
We denote by $\Pi,L,U \vdash \alpha$ the fact that $\alpha \in \mathit{base}(\Pi)$ is inferred true by support, which is the case when $\eval{\alpha}{L}{U} = \mathbf{u}$, and there is $r \in \mathit{instantiate}(\Pi)$ such that $\alpha$ occurs in $H(r)$ and $\eval{B(r)}{L}{U} = \mathbf{t}$, as defined in Section~\ref{sec:background}.
Similarly, we denote by $\Pi,L,U \vdash \naf \alpha$ the fact that $\alpha \in \mathit{base}(\Pi)$ is inferred false by lack of support, constraint-like rules and choice rules, which is the case when $\eval{\alpha}{L}{U} = \mathbf{u}$, and one of the following conditions holds:
each rule $r \in \mathit{instantiate}(\Pi)$ with $\alpha$ occurring in $H(r)$ is such that $\eval{B(r)}{L}{U} = \mathbf{f}$;
there is $r \in \mathit{instantiate}(\Pi)$ with $\alpha \in B^+(r)$, $\eval{H(r)}{L}{U} = \mathbf{f}$ and $\eval{B(r) \setminus \{\alpha\}}{L}{U} = \mathbf{t}$;
there is $r \in \mathit{instantiate}(\Pi)$ with $H(r)$ of the form \eqref{eq:choice}, $\alpha \in \mathit{atoms}$, $|\mathit{atoms} \cap L| \geq t_2$ and $\eval{B(r)}{L}{U} = \mathbf{t}$.

The \emph{explaining derivation operator} $D_\Pi$ is defined as
\begin{equation*}
    \begin{array}{r}
        D_\Pi(L,U) := (L \cup \{\alpha \in \mathit{base}(\Pi) \mid \Pi,L,U \vdash \alpha\}, \\
        U \setminus \{\alpha \in \mathit{base}(\Pi) \mid \Pi,L,U \vdash \naf \alpha\}).
    \end{array}
\end{equation*}
Let $(L,U) \sqsubseteq (L',U')$ denote the fact that $L \subseteq L' \subseteq U' \subseteq U$, i.e., everything that is true w.r.t.\ $(L,U)$ is true w.r.t.\ $(L',U')$, and everything that is false w.r.t.\ $(L,U)$ is false w.r.t.\ $(L',U')$.

\begin{restatable}{lemma}{LemExplainingDerivationMonotonic}\label{lem:explaning-derivation-monotonic}
Operator $D_\Pi$ is monotonic w.r.t.\ ${}\sqsubseteq{}$.
\end{restatable}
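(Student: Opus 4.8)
The plan is to show that the operator $D_\Pi$ preserves the ordering $\sqsubseteq$, i.e.\ that $(L,U) \sqsubseteq (L',U')$ implies $D_\Pi(L,U) \sqsubseteq D_\Pi(L',U')$. Unwinding the definition of $D_\Pi$, this amounts to two inclusions: first, every atom inferred true by support w.r.t.\ $(L,U)$ is also added (or already present) in the lower bound of $D_\Pi(L',U')$; second, every atom inferred false w.r.t.\ $(L,U)$ is also removed from (or already absent from) the upper bound of $D_\Pi(L',U')$. The crux of both is a monotonicity property of the evaluation function $\eval{\cdot}{L}{U}$ itself, so I would isolate that as the first step.

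First I would establish that the evaluation function is monotone in the following precise sense: if $(L,U) \sqsubseteq (L',U')$, then for every literal or aggregate $\beta$, $\eval{\beta}{L}{U} = \mathbf{t}$ implies $\eval{\beta}{L'}{U'} = \mathbf{t}$, and $\eval{\beta}{L}{U} = \mathbf{f}$ implies $\eval{\beta}{L'}{U'} = \mathbf{f}$; equivalently, a definite truth value is never lost as the interpretation sharpens. This is checked by cases on the form of $\beta$. For a positive literal $p(\overline{c})$, value $\mathbf{t}$ means $p(\overline{c}) \in L \subseteq L'$, and value $\mathbf{f}$ means $p(\overline{c}) \notin U \supseteq U'$; negative literals are symmetric. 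For an aggregate of the form \eqref{eq:aggregate}, the value is definite exactly when $\mathit{aggset}(\beta, U \setminus L) = \emptyset$, i.e.\ the aggregate set is already fixed by $L$; since $L \subseteq L'$ and $U' \subseteq U$ force $U' \setminus L' \subseteq U \setminus L$ and hence $\mathit{aggset}(\beta, U' \setminus L') = \emptyset$ as well, the truth value computed from $L \models \beta$ (which depends only on atoms in $L$, and these persist in $L'$) is preserved. The choice case is analogous, using $|I \cap \mathit{atoms}|$. This monotonicity lifts directly to rule bodies $\eval{B(r)}{L}{U}$ by the definition that reads $\mathbf{t}$ as ``all conjuncts $\mathbf{t}$'' and $\mathbf{f}$ as ``some conjunct $\mathbf{f}$''.

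With the evaluation lemma in hand, the two inclusions follow by inspecting each inference rule. For inference true by support, if $\Pi, L, U \vdash \alpha$ via rule $r$ with $\eval{B(r)}{L}{U} = \mathbf{t}$, then $\eval{B(r)}{L'}{U'} = \mathbf{t}$ by the lemma, so $\alpha$ is supported w.r.t.\ $(L',U')$ as well (and if $\alpha$ has already become definite in $(L',U')$, then $\alpha \in L'$ already, which is even better); either way $\alpha$ lands in the lower bound of $D_\Pi(L',U')$. For the three falsity rules the argument is similar: lack of support needs every defining rule body to be $\mathbf{f}$, which is preserved; the constraint-like and choice conditions require certain bodies to be $\mathbf{t}$ and certain heads to be $\mathbf{f}$, all of which persist, while side conditions like $|\mathit{atoms} \cap L| \ge t_2$ only grow more favorable as $L$ enlarges to $L'$. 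Hence every atom removed from $U$ is also removed from (or already outside) $U'$.

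The main obstacle I anticipate is the bookkeeping around atoms whose truth value has \emph{already} been decided in the sharper interpretation $(L',U')$: the inference predicates $\Pi, L', U' \vdash \alpha$ explicitly require $\eval{\alpha}{L'}{U'} = \mathbf{u}$, so an atom inferred true w.r.t.\ $(L,U)$ might not be re-inferred w.r.t.\ $(L',U')$ simply because it is no longer undefined there. I would handle this by arguing that in that case the atom already sits in $L'$ (or is already out of $U'$), so the required inclusion $L \cup \{\dots\} \subseteq L'_{\text{new}}$ still holds — the containment we must verify for $\sqsubseteq$ does not demand that $D_\Pi$ re-derive the atom, only that the atom be true (resp.\ false) in the target. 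A secondary subtlety is making sure the aggregate case of the evaluation lemma is airtight, since that is the one genuinely new ingredient compared to the classical normal-program setting; I would state the aggregate-set containment $\mathit{aggset}(\beta, U' \setminus L') \subseteq \mathit{aggset}(\beta, U \setminus L)$ explicitly and use stratification to ensure the comparison value is well defined, then conclude.
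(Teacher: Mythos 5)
Your proposal is correct and follows essentially the same route as the paper's proof: a case analysis over the four inference rules, each reduced to the persistence of definite truth values of literals, aggregates, and choices under $\sqsubseteq$. The only difference is that you isolate and verify the monotonicity of the evaluation function $\eval{\cdot}{L}{U}$ explicitly (including the aggregate-set containment), whereas the paper simply asserts it; your treatment of atoms already decided in $(L',U')$ matches the paper's restriction to $\alpha \notin L'$ (resp.\ $\alpha \in U'$).
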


 \begin{proof}
For $(L,U) \sqsubseteq (L',U')$, we shall show that $D_\Pi(L,U) \sqsubseteq D_\Pi(L',U')$ holds.
For $\alpha \in D_\Pi(L,U)_1 \setminus L$ such that $\alpha \notin L'$, we have $\Pi,L,U \vdash \alpha$, that is, there is $r \in \mathit{instantiate}(\Pi)$ such that $\alpha$ occurs in $H(r)$ and $\eval{B(r)}{L}{U} = \mathbf{t}$.
As $(L,U) \sqsubseteq (L',U')$, we have that $\eval{B(r)}{L'}{U'} = \mathbf{t}$, that is, $\Pi,L',U' \vdash \alpha$ holds, and therefore $\alpha \in D_\Pi(L',U')_1 \setminus L$.

For $\alpha \in U \setminus D_\Pi(L,U)_2$ such that $\alpha \in U'$, we have $\Pi,L,U \vdash \naf \alpha$, and therefore we have three cases:
\begin{enumerate}
\item 
Each rule $r \in \mathit{instantiate}(\Pi)$ with $\alpha$ occurring in $H(r)$ is such that $\eval{B(r)}{L}{U} = \mathbf{f}$.
As $(L,U) \sqsubseteq (L',U')$, $\eval{B(r)}{L'}{U'} = \mathbf{f}$ holds.

\item
There is $r \in \mathit{instantiate}(\Pi)$ with $\alpha \in B^+(r)$, $\eval{H(r)}{L}{U} = \mathbf{f}$ and $\eval{B(r) \setminus \{\alpha\}}{L}{U} = \mathbf{t}$.
As $(L,U) \sqsubseteq (L',U')$, $\eval{H(r)}{L'}{U'} = \mathbf{f}$ and $\eval{B(r) \setminus \{\alpha\}}{L'}{U'} = \mathbf{t}$.

\item
There is $r \in \mathit{instantiate}(\Pi)$ with $H(r)$ of the form \eqref{eq:choice}, $\alpha \in \mathit{atoms}$, $|\mathit{atoms} \cap L| \geq t_2$ and $\eval{B(r)}{L}{U} = \mathbf{t}$.
As $(L,U) \sqsubseteq (L',U')$, $|\mathit{atoms} \cap L'| \geq t_2$ and $\eval{B(r)}{L'}{U'} = \mathbf{t}$.
\end{enumerate}
In any case, $\Pi,L',U' \vdash \alpha$ holds, and therefore $\alpha \in U' \setminus D_\Pi(L',U')_2$.
\end{proof}

\begin{restatable}{lemma}{LemApproacingAnswerSet}\label{lem:approacing-answer-set}
$L \subseteq A \subseteq U$ implies $D_\Pi(L,U)_1 \subseteq A \subseteq D_\Pi(L,U)_2$.
\end{restatable}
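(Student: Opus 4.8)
The plan is to prove the two inclusions $D_\Pi(L,U)_1 \subseteq A$ and $A \subseteq D_\Pi(L,U)_2$ separately, both resting on a single bridging observation: when $L \subseteq A \subseteq U$, a \emph{definite} three-valued evaluation agrees with two-valued satisfaction by $A$. Precisely, I would first prove that for every literal, aggregate, or choice $\beta$ occurring in $\mathit{instantiate}(\Pi)$, $\eval{\beta}{L}{U} = \mathbf{t}$ implies $A \models \beta$ and $\eval{\beta}{L}{U} = \mathbf{f}$ implies $A \not\models \beta$. For positive and negative literals this is immediate from $L \subseteq A \subseteq U$. The delicate cases are aggregates and choices: being definite means the undefinedness test fails, i.e.\ $\mathit{aggset}(\beta, U \setminus L) = \emptyset$ (resp.\ $(U \setminus L) \cap \mathit{atoms} = \emptyset$), and together with $L \subseteq A \subseteq U$ this forces $\mathit{aggset}(\beta, A) = \mathit{aggset}(\beta, L)$ (resp.\ $A \cap \mathit{atoms} = L \cap \mathit{atoms}$); hence $A$ and $L$ assign $\beta$ the same truth value, matching $\eval{\beta}{L}{U}$. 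Lifting this to bodies gives $\eval{B(r)}{L}{U} = \mathbf{t} \Rightarrow A \models B(r)$ and $\eval{B(r)}{L}{U} = \mathbf{f} \Rightarrow A \not\models B(r)$, and likewise for heads.

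For $D_\Pi(L,U)_1 \subseteq A$, since $L \subseteq A$ it suffices to place every $\alpha$ with $\Pi,L,U \vdash \alpha$ inside $A$. Such an $\alpha$ occurs in $H(r)$ for some rule $r$ with $\eval{B(r)}{L}{U} = \mathbf{t}$, so $A \models B(r)$ and, as $A \models \Pi$, $A \models H(r)$. If $H(r)$ is the atom $\alpha$, we are done. If $H(r)$ is a choice, I would use the cardinality refinement of support inference for choices, $|\mathit{atoms} \cap U| = t_1$: from $A \cap \mathit{atoms} \subseteq U \cap \mathit{atoms}$ and $t_1 \leq |A \cap \mathit{atoms}|$ it follows that $A \cap \mathit{atoms} = U \cap \mathit{atoms}$, and since $\eval{\alpha}{L}{U} = \mathbf{u}$ gives $\alpha \in (U \setminus L) \cap \mathit{atoms}$, we conclude $\alpha \in A$. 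This case is where that refinement is genuinely needed, so I would not omit it.

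For $A \subseteq D_\Pi(L,U)_2$, since $A \subseteq U$ it suffices to show that no $\alpha \in A$ satisfies $\Pi,L,U \vdash \naf \alpha$, which I would do by refuting each of the three falsity-inference conditions. In the constraint-like case, $\alpha \in A$ together with $\eval{B(r) \setminus \{\alpha\}}{L}{U} = \mathbf{t}$ yields $A \models B(r)$, hence $A \models H(r)$, contradicting $\eval{H(r)}{L}{U} = \mathbf{f}$ (which by the bridging claim gives $A \not\models H(r)$). In the choice case, $\eval{B(r)}{L}{U} = \mathbf{t}$ gives $A \models H(r)$, so $|A \cap \mathit{atoms}| \leq t_2$; combined with $|\mathit{atoms} \cap L| \geq t_2$ and $L \cap \mathit{atoms} \subseteq A \cap \mathit{atoms}$ this forces $L \cap \mathit{atoms} = A \cap \mathit{atoms}$, and since $\alpha \notin L$ (it is undefined) we get $\alpha \notin A$, a contradiction.

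I expect the remaining lack-of-support case to be the main obstacle, as it is the only point where the minimality half of the answer-set definition is needed rather than merely $A \models \Pi$. Here I would first establish supportedness: if $\alpha \in A$ then some $r \in \mathit{instantiate}(\Pi)$ has $\alpha$ in $H(r)$ with $A \models B(r)$. The argument is by contradiction --- if no such rule existed, then every rule of $\mathit{reduct}(\Pi,A)$ would have its (expanded) head atom in $A \setminus \{\alpha\}$, so $A \setminus \{\alpha\} \models \mathit{reduct}(\Pi,A)$, contradicting the minimality of $A$. Supportedness then closes the case at once: the supporting rule has $A \models B(r)$, whereas lack-of-support inference requires $\eval{B(r)}{L}{U} = \mathbf{f}$, i.e.\ $A \not\models B(r)$. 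Aside from this supportedness step and the care needed for aggregates and choices in the bridging claim, the rest is routine bookkeeping.
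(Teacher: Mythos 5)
Your proof is correct and follows the same overall route as the paper's: the same two inclusions, the same implicit bridge from definite three-valued evaluation to satisfaction by $A$, the same three-case analysis for $\Pi,L,U \vdash \naf\alpha$, with the lack-of-support case discharged via minimality of $A$ over the reduct and the other two via $A \models \Pi$. Your ``supportedness'' detour is just a repackaging of the paper's one-line argument that $A \setminus \{\alpha\} \models \mathit{reduct}(\Pi,A)$.

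The one genuine divergence is the choice-head case of the first inclusion. The paper's proof handles $D_\Pi(L,U)_1 \subseteq A$ uniformly by observing that $\mathit{expand}(r,A) \subseteq \mathit{reduct}(\Pi,A)$ and that ``$\alpha \leftarrow B(r)$ belongs to the reduct, therefore $\alpha \in A$'' --- but $\mathit{expand}(r,A)$ only produces the rule $\alpha \leftarrow B(r)$ when $\alpha \in A$ already, so for a choice head this step presupposes what it is meant to prove. You instead invoke the cardinality refinement $|\mathit{atoms} \cap U| = t_1$ and derive $U \cap \mathit{atoms} = A \cap \mathit{atoms} \ni \alpha$, which is a sound repair; note, however, that the paper explicitly declares that this extra condition ``will not be used,'' and the formal definition of $\Pi,L,U \vdash \alpha$ in Section~4 omits it, so your argument proves a slightly different (strengthened) inference rule than the one the paper defines. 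The paper's intended escape hatch is presumably the clause in the definition of (well-founded and explaining) derivations that only atoms \emph{of $A$} are ever added to the lower bound, which makes the first inclusion trivial but is not reflected in the operator $D_\Pi$. Either fix works; yours is self-contained, the paper's requires reading the derivation definitions as constraining the operator. Everything else in your write-up, including the careful treatment of aggregates and choices in the bridging claim, is detail the paper leaves implicit.
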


\begin{proof}
For $\alpha \in D_\Pi(L,U)_1 \setminus L$ we have $\Pi,L,U \vdash \alpha$, that is, there is $r \in \mathit{instantiate}(\Pi)$ such that $\eval{B(r)}{L}{U} = \mathbf{t}$.
Hence, $A \models B(r)$, and therefore $\mathit{expand}(r,A) \subseteq \mathit{reduct}(\Pi,A)$.
In particular, $\alpha \leftarrow B(r)$ belongs to the reduct, and therefore $\alpha \in A$.

For $\alpha \in U \setminus D_\Pi(L,U)_2$ we have $\Pi,L,U \vdash \naf \alpha$ and we have to show that $\alpha \notin A$.
Three cases:
\begin{enumerate}
\item 
Each rule $r \in \mathit{instantiate}(\Pi)$ with $\alpha$ occurring in $H(r)$ is such that $\eval{B(r)}{L}{U} = \mathbf{f}$.

\item
There is $r \in \mathit{instantiate}(\Pi)$ with $\alpha \in B^+(r)$, $\eval{H(r)}{L}{U} = \mathbf{f}$ and $\eval{B(r) \setminus \{\alpha\}}{L}{U} = \mathbf{t}$.

\item
There is $r \in \mathit{instantiate}(\Pi)$ with $H(r)$ of the form \eqref{eq:choice}, $\alpha \in \mathit{atoms}$, $|\mathit{atoms} \cap L| \geq t_2$ and $\eval{B(r)}{L}{U} = \mathbf{t}$.
\end{enumerate}
In the first case, $A \setminus \{\alpha\} \models \mathit{reduct}(\Pi,A)$, and therefore $A \setminus \{\alpha\} = A$ because $A$ is an answer set of $\Pi$.
In the other two cases, $\alpha \notin A$ because $A \models \Pi$ by assumption.
\end{proof}

The explaining derivation from $(L,U)$ is obtained as the fix point of the sequence $(L_0,U_0) := (L,U)$, $(L_{i+1},U_{i+1}) := D_\Pi(L_i,U_i)$ for $i \geq 0$.
Note that the fix point is reached in at most $|\mathit{base}(\Pi)|$ steps because of Lemma~\ref{lem:explaning-derivation-monotonic} and each application of $D_\Pi$ reduces the undefined atoms (or is a fix point). Thus, the system eventually terminates in at most $|\mathit{base}(\Pi)|$  steps.

\begin{restatable}{lemma}{LemBigUS}\label{lem:big-us}
For any answer set $A$ of $\Pi$,
set $\mathit{base}(\Pi) \setminus A$ is an assumption set for $\Pi$ and $A$.
\end{restatable}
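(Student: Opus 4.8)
The plan is to take $X := \mathit{base}(\Pi) \setminus A$ and verify the two requirements in the definition of an assumption set: that $X \subseteq \mathit{base}(\Pi) \setminus A$ (immediate, since the two sets coincide) and that the explaining derivation for $\Pi$ and $A$ from $(\emptyset, \mathit{wf}(\Pi,A)_2 \setminus X)$ terminates with $A$. First I would simplify the starting interpretation. Because the well-founded derivation only removes unfounded atoms from its upper bound, and no atom of the answer set $A$ is ever unfounded with respect to an interpretation consistent with $A$, we have $A \subseteq \mathit{wf}(\Pi,A)_2$; hence $\mathit{wf}(\Pi,A)_2 \setminus X = \mathit{wf}(\Pi,A)_2 \cap A = A$, and the explaining derivation in question starts from $(\emptyset, A)$. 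The goal therefore becomes: the fix point $(L^*, U^*)$ of the sequence $(L_0,U_0) := (\emptyset, A)$, $(L_{i+1},U_{i+1}) := D_\Pi(L_i,U_i)$ equals $(A,A)$.

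Next I would pin down the upper bound. Since $\emptyset \subseteq A \subseteq A$, Lemma~\ref{lem:approacing-answer-set} applied inductively gives $L_i \subseteq A \subseteq U_i$ for every $i$; as $D_\Pi$ never enlarges the upper bound, $U_i \subseteq U_0 = A$, so $U_i = A$ at every step and in particular $U^* = A$. The derivation terminates by the termination remark following Lemma~\ref{lem:explaning-derivation-monotonic}. It remains to show $L^* = A$; since we already know $L^* \subseteq A = U^*$, the atoms in $A \setminus L^*$ are exactly those left undefined at the fix point, and it suffices to rule out $A \setminus L^* \neq \emptyset$.

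The core of the argument is to show that the fix-point lower bound satisfies $L^* \models \mathit{reduct}(\Pi,A)$: because $L^* \subseteq A$ and $A$, being an answer set, is a subset-minimal model of its reduct, this forces $L^* = A$. Concretely, take any reduct rule $p(\overline{c}) \leftarrow B(r)$ with $L^* \models B(r)$, arising from $r \in \mathit{instantiate}(\Pi)$ with $A \models B(r)$ and $p(\overline{c}) \in A$ occurring in $H(r)$; I claim $\eval{B(r)}{L^*}{A} = \mathbf{t}$. Negative literals are unproblematic, since $A \models B(r)$ puts their atoms outside $A = U^*$, making them false and hence decided; positive literals are unproblematic precisely because we assume $L^* \models B(r)$, so their atoms already lie in $L^*$ (this is what neutralises positive recursion: support inference only ever fires on a fully true body). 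If the claim holds and $p(\overline{c}) \notin L^*$, then $p(\overline{c})$ is undefined and inferred true by support, contradicting that $(L^*,A)$ is a fix point; thus $p(\overline{c}) \in L^*$, and every reduct rule is satisfied by $L^*$.

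The one genuinely delicate point, and the step I expect to be the main obstacle, is the aggregate literals in $B(r)$: even with $L^* \models \beta$ and $A \models \beta$, an aggregate $\beta$ is undefined whenever $\mathit{aggset}(\beta, A \setminus L^*) \neq \emptyset$, i.e.\ some contributing atom of $A$ has not yet entered $L^*$. Here I would invoke stratification of aggregates: the predicates occurring in $\beta$ lie in strata strictly below that of the head predicate of $r$. I would therefore strengthen the target claim to an induction over strata, proving that the restriction of $A$ to strata up to level $k$ is contained in $L^*$. At level $k$ the induction hypothesis places all lower-stratum atoms of $A$ in $L^*$, so every aggregate occurring in a level-$k$ rule is decided with respect to $(L^*,A)$: its aggregate set relative to $A \setminus L^*$ is empty, and it evaluates to $\mathbf{t}$ since $L^* \models \beta$. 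The level-$k$ fragment of the reduct then behaves as a definite program whose negative and aggregate conditions are constants; $L^*$ is closed under it at the fix point (by the support argument above), hence $L^*$ is a model of that fragment and therefore contains its least model, which is the level-$k$ part of $A$. Combined with $L^* \subseteq A$ this yields equality stratum by stratum, and at the top level $L^* = A$, completing the proof.
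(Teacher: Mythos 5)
Your proof is correct and follows essentially the same route as the paper's: reduce via Lemma~\ref{lem:approacing-answer-set} to showing $A \subseteq L^*$, then argue that otherwise $L^*$ would be a model of $\mathit{reduct}(\Pi,A)$ strictly contained in $A$, contradicting minimality, with an induction along the (acyclic) dependency graph to ensure aggregates are decided before they are needed. If anything, you are more careful than the paper at two points it glosses over: pinning down the starting interpretation as $(\emptyset,A)$ (the paper's proof literally writes $(\emptyset,\mathit{base}(\Pi)\setminus A)$, an apparent typo) and spelling out why each body literal evaluates to $\mathbf{t}$ rather than $\mathbf{u}$ at the fix point.
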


\begin{proof}
Let $(L,U)$ be the explaining derivation from $(\emptyset, \mathit{base}(\Pi) \setminus A)$.
Thanks to Lemma~\ref{lem:approacing-answer-set}, it is sufficient to show that $p(\overline{c}) \in A$ implies $p(\overline{c}) \in L$.
Let us consider a topological ordering $C_1,\ldots,C_n$ ($n \geq 1$) for the strongly connected components of $\mathcal{G}_\Pi$, and let $p \in C_i$.
We use induction on $i$.
Since $p(\overline{c}) \in A$, there must be $r \in \mathit{reduct}(\Pi,A)$ such that $H(r) = p(\overline{c})$ and $A \models B(r)$.
Hence, $\eval{B^-(r)}{L}{U} = \mathbf{t}$.
Moreover, $\eval{B^\Sigma(r)}{L}{U} = \mathbf{t}$, either because $i = 1$ and $B^\Sigma(r) = \emptyset$, or because of the induction hypothesis.
Therefore, to have $\alpha \notin L$, it must be the case that $\eval{B^+(r)}{L}{U} \neq \mathbf{t}$ for all such rules, but in this case $L \models \mathit{reduct}(\Pi,A)$, a contradiction with the assumption that $A$ is an answer set of $\Pi$.
\end{proof}

Given Lemma~\ref{lem:big-us}, the proof of Main Theorem is immediate by the definition of $\mathit{MAS}(\Pi,A,\alpha)$ as following:

\begin{proof} [Proof of Main Theorem]
By definition, a minimal assumption set for $\Pi$, $A$ and $\alpha$ is a set $X \in \mathit{AS}(\Pi,A)$ such that $X' \subset X$ implies $X' \notin \mathit{AS}(\Pi,A)$, and $\alpha \in X$ implies $\alpha \in X'$ for all $X' \in \mathit{AS}(\Pi,A)$.
Lemma~\ref{lem:big-us} guarantees the existence of an assumption set for $\Pi$ and $A$.
Existence of a minimal assumption set for $\Pi$, $A$ and $\alpha$ is therefore guaranteed.
\end{proof}

\section{Generation via Meta-Programming}\label{sec:meta-programming} 
By leveraging ASP systems, the concepts introduced in Section~\ref{sec:explanations} can be computed.
A meta-programming approach is presented in this section, where the full language of ASP is used, including constructs omitted in the previous sections, like weak constraints, uninterpreted functions, conditional literals and @-terms.
The reader is referred to \cite{DBLP:journals/tplp/CalimeriFGIKKLM20} for details.
We will use the name \emph{ASP programs} for encodings using the full language of ASP, in contrast to the name \emph{program} that we use for encodings using the restricted syntax introduced in Section~\ref{sec:background}.

Program $\Pi$, answer set $A$ and the atom to explain are encoded by a set of facts obtained by computing the unique answer set of the ASP program $\mathit{serialize}(\Pi,A,\alpha)$, defined next.
Each atom $p(\overline{c})$ in $\mathit{base}(\Pi)$ is encoded by a fact \lstinline|atom($p(\overline{c})$)|;
moreover, the encoding includes a fact \lstinline|true($p(\overline{c})$)| if $p(\overline{c}) \in A$, and \lstinline|false($p(\overline{c})$)| otherwise;
additionally, if $p(\overline{c})$ is false in $\mathit{wf}(\Pi,A)$, the encoding includes a fact \lstinline|explained$\_$by($p(\overline{c})$, initial$\_$well$\_$founded)|.
As for $\alpha$, the encoding includes a fact \lstinline|explain($\alpha$)|.
Each rule $r$ of $\mathit{instantiate}(\Pi)$ is encoded by
\begin{asp}
rule($\mathit{id}(\overline{X})$) :- atom($p_1(\overline{t_1})$), ..., atom($p_n(\overline{t_n})$).
\end{asp}
where $\mathit{id}$ is an identifier for $r$, $\overline{X}$ are the global variables of $r$, and $B^+(r) = \{p_i(\overline{t_i}) \mid i = 1, \ldots, n\}$;
moreover, the encoding includes
\begin{asp}
  head($\mathit{id}(\overline{X})$,$p(\overline{t})$) :- rule($\mathit{id}(\overline{X})$).
  pos_body($\mathit{id}(\overline{X})$,$p'(\overline{t'})$) :- rule($\mathit{id}(\overline{X})$).
  neg_body($\mathit{id}(\overline{X})$,$p''(\overline{t''})$) :- rule($\mathit{id}(\overline{X})$).
\end{asp}
for each $p(\overline{t})$ occurring in $H(r)$, $p'(\overline{t'}) \in B^+(r)$ and $p''(\overline{t''}) \in B^(r)$;
additionally, for each aggregate $\alpha$ of the form \eqref{eq:aggregate} in $B^\Sigma(r)$, the encoding includes
\begin{asp}
pos_body($\mathit{id}(\overline{X})$,$\mathit{agg}(\overline{X})$) :- rule($\mathit{id}(\overline{X})$).
aggregate($\mathit{agg}(\overline{X})$) :- rule($\mathit{id}(\overline{X})$).
true($\mathit{agg}(\overline{X})$) :- rule($\mathit{id}(\overline{X})$), 
    #sum$\{t_a,\overline{t'}$ : true($p(\overline{t})$)$\} \odot t_g$.
false($\mathit{agg}(\overline{X})$):- rule($\mathit{id}(\overline{X})$), not true($\mathit{agg}(\overline{X})$).

rule($\mathit{agg}(\overline{X})$):- aggregate($\mathit{agg}(\overline{X})$),true($\mathit{agg}(\overline{X})$).
head($\mathit{agg}(\overline{X})$,$\mathit{agg}(\overline{X})$) :- rule($\mathit{agg}(\overline{X})$).
pos_body($\mathit{agg}(\overline{X})$,$p(\overline{t})$):- rule($\mathit{agg}(\overline{X})$),true($p(\overline{t})$).
neg_body($\mathit{agg}(\overline{X})$,$p(\overline{t})$):-rule($\mathit{agg}(\overline{X})$),false($p(\overline{t})$).

rule(($\mathit{agg}(\overline{X})$,$p(\overline{t})$)) :- aggregate($\mathit{agg}(\overline{X})$), false($\mathit{agg}(\overline{X})$), atom($p(\overline{t})$).
head(($\mathit{agg}(\overline{X})$,$p(\overline{t})$),$\mathit{agg}(\overline{X})$):- rule(($\mathit{agg}(\overline{X})$,$p(\overline{t})$)).
pos_body(($\mathit{agg}(\overline{X})$,$p(\overline{t})$),$p(\overline{t})$) :- rule(($\mathit{agg}(\overline{X})$,$p(\overline{t})$)), false($p(\overline{t})$).
neg_body(($\mathit{agg}(\overline{X})$,$p(\overline{t})$),$p(\overline{t})$) :- rule(($\mathit{agg}(\overline{X})$,$p(\overline{t})$)), true($p(\overline{t})$).
\end{asp}
where $\mathit{agg}$ is an identifier for $\alpha$;
finally, if $H(r)$ is a choice of the form \eqref{eq:choice}, the encoding includes
\begin{asp}
  choice($\mathit{id}(\overline{X})$,$t_1$,$t_2$) :- rule($\mathit{id}(\overline{X})$).
\end{asp}
Note that a true ground aggregate of the form \eqref{eq:aggregate} identified by $\mathit{agg}(\overline{c})$ is associated with a single rule whose body becomes true after all instances of $p(\overline{t})$ are assigned the truth value they have in the answer set $A$;
on the other hand, a false aggregate is associated with one rule for each instance of $p(\overline{t})$, whose bodies becomes false when instances of $p(\overline{t})$ are assigned the truth value they have in the answer set $A$.

\begin{example}
Recall $\Pi_\mathit{run}$ and $A_\mathit{run}$ from Examples~\ref{ex:running:2}--\ref{ex:answer-set}.
The ASP program $\mathit{serialize}(\Pi_\mathit{run},A,\mathit{arc(a,b)})$ includes
{\rm
\begin{asp}
atom(edge(a,b)).  atom(arc(b,a)).  atom (arc(a,b)).  explain(arc(a,b)).
true(edge(a,b)).  true(arc(b,a)).  false(arc(a,b)).

rule(r4(X,Y)) :- atom(edge(X,Y)).
choice(r4(X,Y),1,1) :- rule(r4(X,Y)).
head(r4(X,Y), arc(X,Y)) :- rule(r4(X,Y)).
head(r4(X,Y), arc(Y,X)) :- rule(r4(X,Y)).
pos_body(r4(X,Y), edge(X,Y)):- rule(r4(X,Y)).

aggregate(agg1(T)) :- rule(r9(T)).
true(agg1(T)) :- rule(r9(T)), #sum{1,X,Y : true(fail(X,Y))} > T.
\end{asp}
}\noindent
and several other rules.
The answer set of $\mathit{serialize}(\Pi_\mathit{run},A,\mathit{arc(a,b)})$ includes, among other atoms, {\rm\lstinline|aggregate(agg1(0))|} and {\rm\lstinline|false(agg1(0))|}.
\hfill$\blacksquare$
\end{example}

\begin{figure}[t]
\begin{lstlisting}[literate={~} {$\sim$}{1}, basicstyle=\ttfamily\scriptsize]
{assume_false(Atom)} :- false(Atom), not aggregate(Atom).
:~ false(Atom), assume_false(Atom), not explain(Atom). [1@1, Atom]
:~ false(Atom), assume_false(Atom),     explain(Atom). [1@2, Atom]

has_explanation(Atom) :- explained_by(Atom,_).
:- atom(X), #count{Reason: explained_by(Atom,Reason)} != 1.

explained_by(Atom, assumption) :- assume_false(Atom).

{explained_by(Atom, (support, Rule))} :- head(Rule,Atom), true(Atom);
    true (BAtom) : pos_body(Rule,BAtom);  has_explanation(BAtom) : pos_body(Rule,BAtom);
    false(BAtom) : neg_body(Rule,BAtom);  has_explanation(BAtom) : neg_body(Rule,BAtom).

{explained_by(Atom, lack_of_support)} :- false(Atom);  false_body(Rule) : head(Rule,Atom).
false_body(Rule) :- rule(Rule);  pos_body(Rule,BAtom), false(BAtom), has_explanation(BAtom).
false_body(Rule) :- rule(Rule);  neg_body(Rule,BAtom), true(BAtom), has_explanation(BAtom).

{explained_by(Atom, (required_to_falsify_body, Rule))} :- false(Atom), not aggregate(Atom);
    pos_body(Rule,Atom), false_head(Rule); true(BAtom) : pos_body(Rule,BAtom), BAtom != Atom;
    has_explanation(BAtom) : pos_body(Rule,BAtom), BAtom != Atom;
    false(BAtom) : neg_body(Rule,BAtom);  has_explanation(BAtom) : neg_body(Rule,BAtom).
explained_head(Rule) :- rule(Rule); has_explanation(HAtom) : head(Rule,HAtom).
false_head(Rule) :- explained_head(Rule), not choice(Rule,_,_);
    false(HAtom) : head(Rule,HAtom).
false_head(Rule) :- explained_head(Rule), choice(Rule, LowerBound, UpperBound);
    not LowerBound <= #count{HAtom' : head(Rule,HAtom'), true(HAtom')} <= UpperBound.

{explained_by(Atom, (choice_rule, Rule))} :- false(Atom);
    head(Rule,Atom), choice(Rule, LowerBound, UpperBound);
    true(BAtom) : pos_body(Rule,BAtom);  has_explanation(BAtom) : pos_body(Rule,BAtom);
    false(BAtom) : neg_body(Rule,BAtom);  has_explanation(BAtom) : neg_body(Rule,BAtom);
    #count{HAtom : head(Rule, HAtom), true(HAtom), has_explanation(HAtom)} = UpperBound.
\end{lstlisting}
    \caption{ASP program $\Pi_\mathit{MAS}$ for computing a minimal assumption set}\label{fig:mas}
\end{figure}

\begin{figure}[t]
\begin{lstlisting}[basicstyle=\ttfamily\scriptsize]
link(Atom, BAtom) :- explained_by(_, Atom, (support, Rule));  pos_body(Rule, BAtom).
link(Atom, BAtom) :- explained_by(_, Atom, (support, Rule));  neg_body(Rule, BAtom).

{link(Atom, A) : pos_body(Rule,A), false(A), explained_by(I,A,_), I < Index;
 link(Atom, A) : neg_body(Rule,A), true (A), explained_by(I,A,_), I < Index} = 1 :- 
    explained_by(_, Atom, lack_of_support);  head(Rule, Atom).

link(Atom,A) :- explained_by(_, Atom, (required_to_falsify_body, Rule)); head(Rule,A).
link(At,A) :- explained_by(_,At,(required_to_falsify_body, Rule)); pos_body(Rule,A), A!=At.
link(Atom,A) :- explained_by(_,Atom,(required_to_falsify_body, Rule)); neg_body(Rule,A).

link(Atom, HAtom) :- explained_by(_,Atom,(choice_rule, Rule)); head(Rule,HAtom), true(HAtom).
link(Atom, BAtom) :- explained_by(_, Atom, (choice_rule, Rule)); pos_body(Rule, BAtom).
link(Atom, BAtom) :- explained_by(_, Atom, (choice_rule, Rule)); neg_body(Rule, BAtom).
\end{lstlisting}

    \caption{ASP program $\Pi_\mathit{DAG}$ for computing a directed acyclic graph associated with an explaining derivation}\label{fig:dag}
\end{figure}

The ASP program $\Pi_\mathit{MAS}$ reported in Figure~\ref{fig:mas}, coupled with a fact for each atom in the answer set of $\mathit{serialize(\Pi,A,\alpha)}$, has optimal answer sets corresponding to cardinality-minimal elements in $\mathit{MAS}(\Pi,A,\alpha)$.
Intuitively, line~1 guesses the assumption set, line~2--3 minimizes the size of the assumption set (preferring to not assume the falsity of the atom to explain), and lines~4--5 impose that each atom must have exactly one explanation.
The other rules encode the explaining derivation for $\Pi$ and $A$ from $\mathit{wf}(\Pi,A) \setminus X$, where $X$ is the guessed assumption set.

Given a minimal assumption set encoded by predicate \lstinline|assume$\_$false/1|, an explaining derivation can be computed by removing lines~1--3 from the ASP program $\Pi_\mathit{MAS}$.
Let $\Pi_\mathit{EXP}$ be such an ASP program.
Finally, given an explaining derivation encoded by \lstinline|explained$\_$by(Index,Atom,Reason)|, with the additional \lstinline|Index| argument encoding the order in the sequence,
a DAG linking atoms according to the derivation can be computed by the ASP program $\Pi_\mathit{DAG}$ reported in Figure~\ref{fig:dag}.

\begin{example}
Let $\Pi_S$ have a fact for each atom in the answer set of $\mathit{serialize}(\Pi_\mathit{run},A_\mathit{run},\mathit{arc}(a,b))$.
$\Pi_\mathit{MAS} \cup \Pi_S$ generates the empty assumption set.
$\Pi_\mathit{EXP} \cup \Pi_S \cup \emptyset$ generates an explaining derivation, for example one including
$\mathit{explained\_by}(\mathit{edge}(a,b),(\mathit{support},r6))$,
$\mathit{explained\_by}(\mathit{arc}(b,a),(\mathit{support},r1(a,b)))$ and
$\mathit{explained\_by}(\mathit{arc}(a,b),(\mathit{choice\_rule},r1(a,b)))$.
Let $\Pi_E$ have a fact for each instance of $\mathit{explained\_by/3}$ in the explaining derivation.
$\Pi_\mathit{DAG} \cup \Pi_S \cup \Pi_E$ generates a DAG, for example one including
$\mathit{link}(\mathit{arc}(b,a),\\ \mathit{edge}(a,b))$,
$\mathit{link}(\mathit{arc}(a,b),\mathit{arc}(b,a))$ and
$\mathit{link}(\mathit{arc}(a,b),\mathit{edge}(a,b))$.
\hfill$\blacksquare$
\end{example}

\section{Implementation and Experiment}\label{sec:experiment}
We deployed an XAI system for ASP named \xasptwo, which is powered by the \system{clingo\ python\ api} \cite{kaminskiromeroschaubwanko2023}. 
By taking an ASP program $\Pi$, one of its answer sets $A$, and an atom $\alpha$ as input, \xasptwo is capable of producing minimal assumption sets, explaining derivations, and DAGs as output to assist the user in determining the assignment of $\alpha$.
The source code is available at \url{https://github.com/alviano/xasp} and an example DAG is given at \url{https://xasp-navigator.netlify.app/}.

The pipeline implemented by \xasptwo starts with the serialization of the input data, which is obtained by means of an ASP program crafted from the abstract syntax tree of $\Pi$ and whose answer set identifies the relevant portion of $\mathit{instantiate}(\Pi)$ and $\mathit{base}(\Pi)$.
In a nutshell, ground atoms provided by the user, $A \cup \{\alpha\}$, are part of $\mathit{base}(\Pi)$ and used to instantiate rules of $\Pi$ (by matching positive body literals), which in turn may extend $\mathit{base}(\Pi)$ with other ground atoms occurring in the instantiated rules;
possibly, some atoms of $\mathit{base}(\Pi)$ of particular interest can be explicitly provided by the user.
Aggregates are also processed automatically by means of an ASP program, and so is the computation of false atoms in the well-founded derivation $\mathit{wf}(\Pi,A)$.

Obtained $\mathit{serialize}(\Pi,A,\alpha)$, \xasptwo proceeds essentially as described in Section~\ref{sec:meta-programming}, by computing a minimal assumption set, an explaining derivation and an explanation DAG.
As an additional optimization, the explaining derivation is shrunk to the atoms reachable from $\alpha$, utilizing an ASP program.
Finally, the user can opt for a few additional steps:
obtain a graphical representation by means of the \system{igraph} network analysis package (\url{https://igraph.org/});
obtain an interactive representation in \url{https://xasp-navigator.netlify.app/};
ask for different minimal assumption sets, explaining derivations and DAGs.

We assessed \xasptwo empirically on the commercial application. 
The ASP program comprises 420 rules and 651 facts.
After grounding, there are 4261 ground rules and 4468 ground atoms.
The program was expected to have a unique answer set, but two answer sets were actually computed.
Our experiment was run on an Intel Core i7-1165G7 @2.80 GHz and 16 GB of RAM.
\xasptwo computed a DAG for the unexpected true atom, \lstinline|behaves$\_$inertially(testing$\_$posTestNeg,121)|, in 14.85 seconds on average, over 10 executions.
The DAG comprises 87 links, 45 internal nodes and 20 leaves, only one of which is explained by assumption;
only 30 of the 420 symbolic rules and 11 of the 651 facts are involved in the DAG;
at the ground level, only 48 of the 4261 ground rules and 65 of the 4468 ground atoms are involved.
Additionally, we repeated the experiment on 10 randomly selected atoms with respect to two different answer sets, repeating each test case 10 times.
We measured an average runtime of $14.79$ seconds, with a variance of $0.004$ seconds.

\begin{table}[!th]
    \centering
  \caption{The action preconditions and effects in Blockworlds problem}
  \label{table:actiondecription}
  \begin{tabular}{p{0.2\textwidth}p{0.35\textwidth}p{0.35\textwidth}} \hline
    \textbf{Action} & \textbf{Precondition} & \textbf{Effects} \\ \hline
    $stack(X,Y)$ \newline - stack block $X$ is on block $Y$ &     
            Block $Y$ is clear
            \newline  The agent holds the block $X$  &
            $X$ is clear
            \newline $X$ is on $Y$
            \newline $Y$ is no longer clear
            \newline The agent does not hold anything     \\ \hline

    $unstack(X,Y)$ \newline - unstack block $X$ is on block $Y$    & 
            $X$ is clear
            \newline $X$ is on $Y$
            \newline The agent does not hold anything  &  
            The agent holds the block $X$
            \newline $Y$ becomes clear
            \newline $X$ is not clear     \\ \hline

    $pickup(X)$ \newline - pickup block $X$ from the table   & 
            $X$ is clear
            \newline $X$ is on the table
            \newline the agent does not hold anything
             &  
            The agent holds the block $X$
            \newline $X$ is no longer on the table and is not clear
                \\ \hline

    $putdown(X)$ \newline - put down block $X$ onto the table   & 
            The agent holds the block $X$
             &  
            $X$ is clear
            \newline $X$ is on the table
            \newline the agent does not hold anything
                \\ \hline
  \end{tabular}
\end{table}

\begin{figure}[!th]
    \centering
    \includegraphics[width=.4\linewidth]{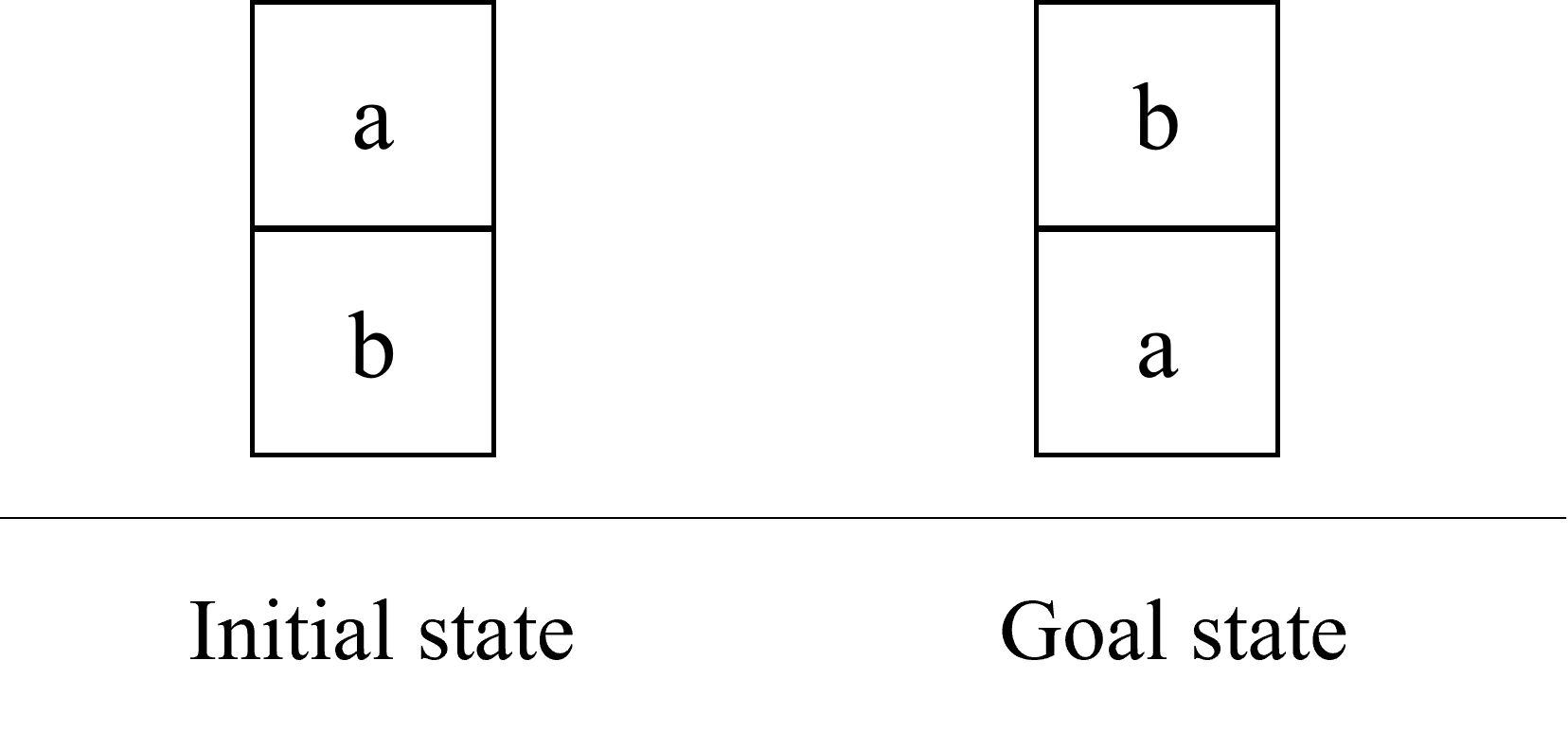}
    \caption{
        The initial and goal states of Blocksworld.
    }\label{fig:planningproblem}
\end{figure}
 
\xasptwo also has the ability to handle explainable planning, meaning it can generate an explanation graph showing why a particular action cannot take place at a certain time. 
To demonstrate this capability, we will use a popular problem known as Blocksworld. The initial state (left) and goal state (right) of the problem are shown in Figure \ref{fig:planningproblem}.
Five fluents are $on(X,Y)$ - block $X$ is on block $Y$, $onTable(X)$ - block $X$ is on the table, $clear(X)$ - block $X$ is clear, $holding(X)$ - the agent holds the block $X$, and $handEmpty$ - the agent does not hold anything.
Four differnt actions are $stack$, $unstack$, $pickup$ and $putdown$.
The domain description of the problem is shown in Table \ref{table:actiondecription} in which the predictions and effects of four actions are presented.

\begin{figure}[t]
\begin{lstlisting}[basicstyle=\ttfamily\scriptsize]
h(X,T+1) :- action(action(A)),occurs(A,T), postcondition(action(A), effect(unconditional),X,value(X,true)).
-h(X,T+1) :- action(action(A)),occurs(A,T), postcondition(action(A), effect(unconditional),X,value(X,false)).
h(X,T+1) :- h(X,T), not -h(X,T+1).
-h(X,T+1) :- -h(X,T), not h(X,T+1).
non_exec(A,T) :- action(action(A)), not h(X,T), precondition(action(A),X,value(X, true)). 
non_exec(A,T) :- action(action(A)), not -h(X,T), precondition(action(A),X,value(X, false)).
:- action(action(A)),occurs(A,T), non_exec(A,T).
\end{lstlisting}

    \caption{ASP program for reasoning about effects of actions \cite{nguyen2020explainable}}\label{fig:actionencoding}
\end{figure}

The rules for reasoning about effects of actions, action generation and goal enforcement \cite{nguyen2020explainable} are utilized as programming input in \xasptwo. Figure \ref{fig:actionencoding} shows the ASP program for reasoning about the effects of actions in which an action occurs only when its preconditions are true and then its effects are true in the next time step. 
Specifically, lines $5$ and $6$ are used to define states in which an action cannot be executed, and constraint is employed to prevent non-executable actions from occurring (line $7$).

For the problem described in Figure \ref{fig:planningproblem}, executing the actions of $unstack(a,b)$, $putdow(a)$, $pickup(b)$ and $stack(b,a)$ at times $0$, $1$, $2$, and $3$ respectively constitutes the optimal plan (assuming time starts at $0$). However, if users are in a rush and want to put down block $a$ on the table at time 0, as represented by the atom $occurs(("putdown", constant("a")), 0)$, they will encounter a false occurrence of the action $putdown(a)$. Figure 6 shows that atom $occurs(("putdown", constant("a")), 0)$ is false because the constraint rule prevents its execution and the prediction of the action holding block a is invalid/false. 

\begin{figure}[!th]
    \centering
    \includegraphics[width=.9\linewidth]{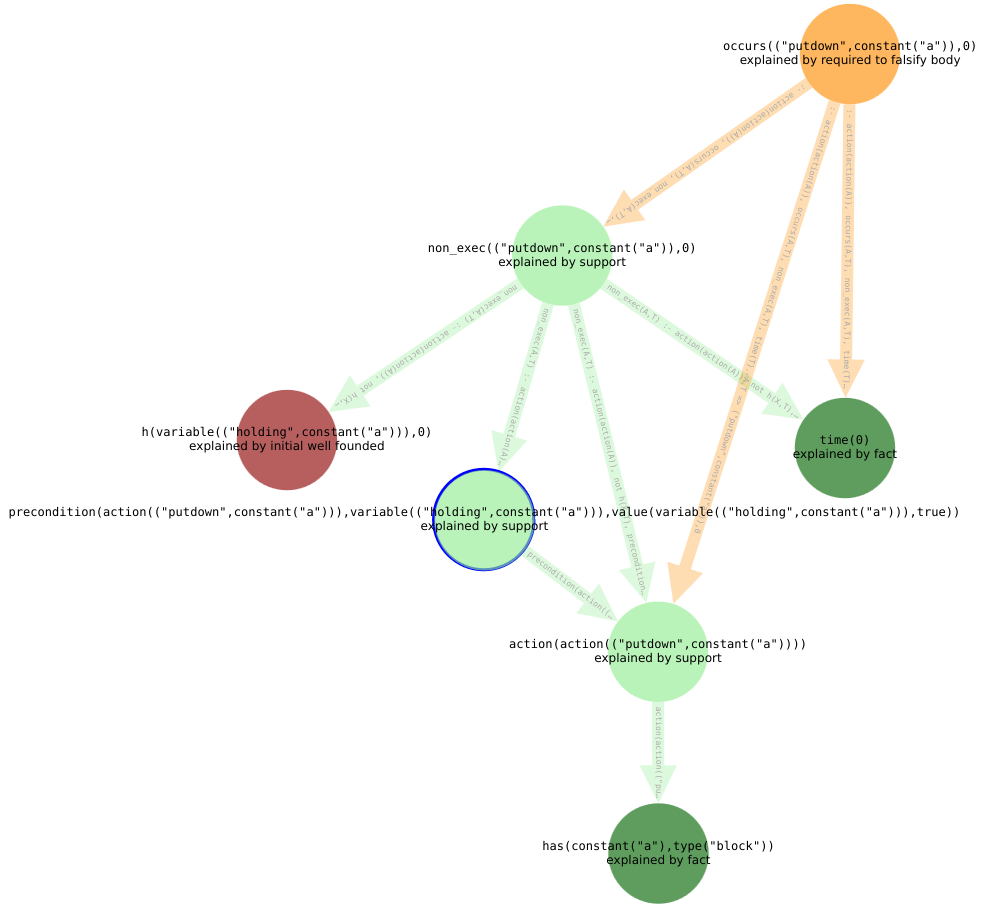}
    \caption{
        The DAG for atom $occurs(("putdown",constant("a")),0)$.
    }\label{fig:putdowna}
\end{figure}

\section{Related Work}\label{sec:rw}
Our work, as stated in the introduction, is situated within the realm of XAI and can be used  as a debugging tool to provide explanations for an unexpected result. For instance, if an element $\alpha$ is false in all answer sets of a program $\Pi$ despite user expectations, our system can help identify which rules are contributing to this anomalous behavior.
Thus, in this section, we will explore both debugging tools for ASP and cutting-edge XAI systems designed for ASP.

In Table~\ref{table}, a summary of the compared features is presented. 
The features include as follows:
whether the explanation is guaranteed to be acyclic;
the capability to handle the input program with aggregates 
and constraints;
the ability to provide an explanation when the query atom can be false in the answer set; and 
whether the system is available for experimentation. 
As can be seen from Table~\ref{table}, our system is capable of providing explanations for false atoms and does not lead to cyclic argumentation in the explanation. \xasptwo is the only system that tackles a program that includes both aggregates and constraints. 
It is worth noting that the topic of aggregates is addressed in another approach \cite{marynissen2022nested}, even though no system implementing this approach is mentioned or available.

\begin{table}[th]
    \centering
  \caption{Summary of compared features}
  \label{table}
  \begin{tabular}{ *{5}{c} }
    
    \toprule
    \textbf{\shortstack{System (if any) \\ and reference}} & \textbf{\shortstack{Acyclic \\ explanation}} & \textbf{\shortstack{Linguistic \\ extentions}} &
    \textbf{\shortstack{Explanation for \\ false atoms}} &
    \textbf{\shortstack{System \\ availability}} \\
    \midrule
    \scasp{} \cite{scasp2020}   &   Yes  &    Constraints   & Yes 
    & Yes \\
    \system{ASPeRiX}  \cite{beatrix2016justifications}  & Yes   & Constraints    & Yes  & Yes \\
    \spock \cite{brain2007illogical}   &    Yes &   Constraints   & No & Yes \\
    \xclingo{} \cite{Cabalar2020}    &  Yes  &  None     & No &  Yes \\
    \dwasp \cite{dodaro2019debugging}   &   Yes  &   Constraints  & No & Yes \\
     \cite{marynissen2022nested} & No    & Aggregates         & Yes & No \\
    \visualdlv \cite{perri2007integrated}    &   Yes  &  Constraints   & No  & 
    Yes \\
    \cite{pontelli2009justifications}    & No   & None    & Yes  & No \\
    \labas \cite{schulz2016justifying}   & No    & None     & Yes & Yes \\
    \expaspc \cite{trieu2021exp}    & No   & Constraints    & Yes  & Yes \\
    \cite{viegas2013justifications} & Yes    & None    & Yes  & No \\
   
    \xasptwo    & Yes   & Aggregates and Constraints  & Yes & Yes\\
    \bottomrule
  \end{tabular}
\end{table}

\section{Conclusion}\label{sec:conclusion}

We have developed and implemented \xasptwo, an XAI system that targets the ASP language and is powered by ASP engines.
Our approach to explaining why an atom is true/false in an answer set involves deriving easy-to-understand inferences originating from a hopefully small set of atoms assumed false. 
\xasptwo has the ability to support different {\small \tt clingo} constructs such as aggregates and constraints.
It produces an explanation as a DAG with the atom to be explained as the root and takes a few seconds to compute the explanation in our test cases.
Further investigation to include ASP's other linguistic constructs such as conditional literals, beyond those currently supported, present in  the future work.

\section*{Acknowledgments}
Portions of this publication and research effort are made possible through the help and support of NIST via cooperative agreement 70NANB21H167. Tran Cao Son was also partially supported by NSF awards \#1757207 and \#1914635.
Mario Alviano was also partially supported 
by Italian Ministry of Research (MUR) 
    under PNRR project FAIR ``Future AI Research'', CUP H23C22000860006,
    under PNRR project Tech4You ``Technologies for climate change adaptation and quality of life improvement'', CUP H23C22000370006, and
    under PNRR project SERICS ``SEcurity and RIghts in the CyberSpace'', CUP H73C22000880001;
by Italian Ministry of Health (MSAL)
    under POS project RADIOAMICA, CUP H53C22000650006;
by the LAIA lab (part of the SILA labs) and 
by GNCS-INdAM. 

\bibliographystyle{eptcs}
\bibliography{iclp2023}

\begin{thebibliography}{10}
\providecommand{\bibitemdeclare}[2]{}
\providecommand{\surnamestart}{}
\providecommand{\surnameend}{}
\providecommand{\urlprefix}{Available at }
\providecommand{\url}[1]{\texttt{#1}}
\providecommand{\href}[2]{\texttt{#2}}
\providecommand{\urlalt}[2]{\href{#1}{#2}}
\providecommand{\doi}[1]{doi:\urlalt{https://doi.org/#1}{#1}}
\providecommand{\eprint}[1]{arXiv:\urlalt{https://arxiv.org/abs/#1}{#1}}
\providecommand{\bibinfo}[2]{#2}

\bibitemdeclare{article}{scasp2020}
\bibitem{scasp2020}
\bibinfo{author}{Joaquín \surnamestart Arias\surnameend},
  \bibinfo{author}{Manuel \surnamestart Carro\surnameend},
  \bibinfo{author}{Zhuo \surnamestart Chen\surnameend} \&
  \bibinfo{author}{Gopal \surnamestart Gupta\surnameend}
  (\bibinfo{year}{2020}): \emph{\bibinfo{title}{Justifications for
  Goal-Directed Constraint Answer Set Programming}}.
\newblock {\slshape \bibinfo{journal}{Electronic Proceedings in Theoretical
  Computer Science}} \bibinfo{volume}{325}, p. \bibinfo{pages}{59–72},
  \doi{10.4204/EPTCS.325.12}.

\bibitemdeclare{inproceedings}{beatrix2016justifications}
\bibitem{beatrix2016justifications}
\bibinfo{author}{Christopher \surnamestart B{\'e}atrix\surnameend},
  \bibinfo{author}{Claire \surnamestart Lef{\`e}vre\surnameend},
  \bibinfo{author}{Laurent \surnamestart Garcia\surnameend} \&
  \bibinfo{author}{Igor \surnamestart St{\'e}phan\surnameend}
  (\bibinfo{year}{2016}): \emph{\bibinfo{title}{Justifications and blocking
  sets in a rule-based answer set computation}}.
\newblock In: {\slshape \bibinfo{booktitle}{Technical Communications of the
  32nd International Conference on Logic Programming (ICLP 2016)}},
  \bibinfo{organization}{Schloss Dagstuhl-Leibniz-Zentrum fuer Informatik}, pp.
  \bibinfo{pages}{6:1--6:15}, \doi{10.4230/OASIcs.ICLP.2016.6}.

\bibitemdeclare{inproceedings}{brain2007illogical}
\bibitem{brain2007illogical}
\bibinfo{author}{Martin \surnamestart Brain\surnameend},
  \bibinfo{author}{Martin \surnamestart Gebser\surnameend},
  \bibinfo{author}{J{\"o}rg \surnamestart P{\"u}hrer\surnameend},
  \bibinfo{author}{Torsten \surnamestart Schaub\surnameend},
  \bibinfo{author}{Hans \surnamestart Tompits\surnameend} \&
  \bibinfo{author}{Stefan \surnamestart Woltran\surnameend}
  (\bibinfo{year}{2007}): \emph{\bibinfo{title}{That is illogical captain! The
  debugging support tool spock for answer-set programs: system description}}.
\newblock In: {\slshape \bibinfo{booktitle}{Proceedings of the Workshop on
  Software Engineering for Answer Set Programming (SEA’07)}}, pp.
  \bibinfo{pages}{71--85}.

\bibitemdeclare{article}{Cabalar2020}
\bibitem{Cabalar2020}
\bibinfo{author}{Pedro \surnamestart Cabalar\surnameend},
  \bibinfo{author}{Jorge \surnamestart Fandinno\surnameend} \&
  \bibinfo{author}{Brais \surnamestart Muñiz\surnameend}
  (\bibinfo{year}{2020}): \emph{\bibinfo{title}{A System for Explainable Answer
  Set Programming}}.
\newblock {\slshape \bibinfo{journal}{Electronic Proceedings in Theoretical
  Computer Science}} \bibinfo{volume}{325}, p. \bibinfo{pages}{124–136},
  \doi{10.4204/EPTCS.325.19}.

\bibitemdeclare{article}{DBLP:journals/tplp/CalimeriFGIKKLM20}
\bibitem{DBLP:journals/tplp/CalimeriFGIKKLM20}
\bibinfo{author}{Francesco \surnamestart Calimeri\surnameend},
  \bibinfo{author}{Wolfgang \surnamestart Faber\surnameend},
  \bibinfo{author}{Martin \surnamestart Gebser\surnameend},
  \bibinfo{author}{Giovambattista \surnamestart Ianni\surnameend},
  \bibinfo{author}{Roland \surnamestart Kaminski\surnameend},
  \bibinfo{author}{Thomas \surnamestart Krennwallner\surnameend},
  \bibinfo{author}{Nicola \surnamestart Leone\surnameend},
  \bibinfo{author}{Marco \surnamestart Maratea\surnameend},
  \bibinfo{author}{Francesco \surnamestart Ricca\surnameend} \&
  \bibinfo{author}{Torsten \surnamestart Schaub\surnameend}
  (\bibinfo{year}{2020}): \emph{\bibinfo{title}{ASP-Core-2 Input Language
  Format}}.
\newblock {\slshape \bibinfo{journal}{Theory Pract. Log. Program.}}
  \bibinfo{volume}{20}(\bibinfo{number}{2}), pp. \bibinfo{pages}{294--309},
  \doi{10.1017/s1471068419000450}.

\bibitemdeclare{article}{dodaro2019debugging}
\bibitem{dodaro2019debugging}
\bibinfo{author}{Carmine \surnamestart Dodaro\surnameend},
  \bibinfo{author}{Philip \surnamestart Gasteiger\surnameend},
  \bibinfo{author}{Kristian \surnamestart Reale\surnameend},
  \bibinfo{author}{Francesco \surnamestart Ricca\surnameend} \&
  \bibinfo{author}{Konstantin \surnamestart Schekotihin\surnameend}
  (\bibinfo{year}{2019}): \emph{\bibinfo{title}{Debugging non-ground ASP
  programs: Technique and graphical tools}}.
\newblock {\slshape \bibinfo{journal}{Theory and Practice of Logic
  Programming}} \bibinfo{volume}{19}(\bibinfo{number}{2}), pp.
  \bibinfo{pages}{290--316}, \doi{10.1017/S1471068418000492}.

\bibitemdeclare{inproceedings}{GelfondL90}
\bibitem{GelfondL90}
\bibinfo{author}{M.~\surnamestart Gelfond\surnameend} \&
  \bibinfo{author}{V.~\surnamestart Lifschitz\surnameend}
  (\bibinfo{year}{1990}): \emph{\bibinfo{title}{Logic programs with classical
  negation}}.
\newblock In \bibinfo{editor}{D.~\surnamestart Warren\surnameend} \&
  \bibinfo{editor}{Peter \surnamestart Szeredi\surnameend}, editors: {\slshape
  \bibinfo{booktitle}{Logic Programming: Proc.~of the Seventh International
  Conference}}, pp. \bibinfo{pages}{579--597}.

\bibitemdeclare{article}{kaminskiromeroschaubwanko2023}
\bibitem{kaminskiromeroschaubwanko2023}
\bibinfo{author}{Roland \surnamestart Kaminski\surnameend},
  \bibinfo{author}{Javier \surnamestart Romero\surnameend},
  \bibinfo{author}{Torsten \surnamestart Schaub\surnameend} \&
  \bibinfo{author}{Philipp \surnamestart Wanko\surnameend}
  (\bibinfo{year}{2023}): \emph{\bibinfo{title}{How to Build Your Own ASP-based
  System?!}}
\newblock {\slshape \bibinfo{journal}{Theory and Practice of Logic
  Programming}} \bibinfo{volume}{23}(\bibinfo{number}{1}), p.
  \bibinfo{pages}{299–361}, \doi{10.1017/S1471068421000508}.

\bibitemdeclare{inproceedings}{li2021discasp}
\bibitem{li2021discasp}
\bibinfo{author}{Fang \surnamestart Li\surnameend}, \bibinfo{author}{Huaduo
  \surnamestart Wang\surnameend}, \bibinfo{author}{Kinjal \surnamestart
  Basu\surnameend}, \bibinfo{author}{Elmer \surnamestart Salazar\surnameend} \&
  \bibinfo{author}{Gopal \surnamestart Gupta\surnameend}
  (\bibinfo{year}{2021}): \emph{\bibinfo{title}{DiscASP: A Graph-based ASP
  System for Finding Relevant Consistent Concepts with Applications to
  Conversational Socialbots}}.
\newblock In: {\slshape \bibinfo{booktitle}{Proceedings 37th International
  Conference on Logic Programming (Technical Communications), {ICLP} Technical
  Communications 2021, Porto (virtual event), 20-27th September 2021}},
  {\slshape \bibinfo{series}{{EPTCS}}} \bibinfo{volume}{345}, pp.
  \bibinfo{pages}{205--218}, \doi{10.4204/EPTCS.345.35}.

\bibitemdeclare{inproceedings}{MarekT99}
\bibitem{MarekT99}
\bibinfo{author}{V.~\surnamestart Marek\surnameend} \&
  \bibinfo{author}{M.~\surnamestart Truszczy\'nski\surnameend}
  (\bibinfo{year}{1999}): \emph{\bibinfo{title}{Stable models and an
  alternative logic programming paradigm}}.
\newblock In: {\slshape \bibinfo{booktitle}{The Logic Programming Paradigm: a
  25-year Perspective}}, pp. \bibinfo{pages}{375--398},
  \doi{10.1007/978-3-642-60085-2_17}.

\bibitemdeclare{article}{marynissen2022nested}
\bibitem{marynissen2022nested}
\bibinfo{author}{Simon \surnamestart Marynissen\surnameend},
  \bibinfo{author}{Jesse \surnamestart Heyninck\surnameend},
  \bibinfo{author}{Bart \surnamestart Bogaerts\surnameend} \&
  \bibinfo{author}{Marc \surnamestart Denecker\surnameend}
  (\bibinfo{year}{2022}): \emph{\bibinfo{title}{On nested justification systems
  (full version)}}.
\newblock {\slshape \bibinfo{journal}{CoRR}} \bibinfo{volume}{abs/2205.04541},
  \doi{10.48550/arXiv.2205.04541}.

\bibitemdeclare{inproceedings}{nguyen2020explainable}
\bibitem{nguyen2020explainable}
\bibinfo{author}{Van \surnamestart Nguyen\surnameend},
  \bibinfo{author}{Stylianos~Loukas \surnamestart Vasileiou\surnameend},
  \bibinfo{author}{Tran~Cao \surnamestart Son\surnameend} \&
  \bibinfo{author}{William \surnamestart Yeoh\surnameend}
  (\bibinfo{year}{2020}): \emph{\bibinfo{title}{Explainable planning using
  answer set programming}}.
\newblock In: {\slshape \bibinfo{booktitle}{Proceedings of the International
  Conference on Principles of Knowledge Representation and Reasoning}}, pp.
  \bibinfo{pages}{662--666}, \doi{10.24963/kr.2020/66}.

\bibitemdeclare{article}{Niemela99}
\bibitem{Niemela99}
\bibinfo{author}{I.~\surnamestart Niemel{\"{a}}\surnameend}
  (\bibinfo{year}{1999}): \emph{\bibinfo{title}{Logic programming with stable
  model semantics as a constraint programming paradigm}}.
\newblock {\slshape \bibinfo{journal}{Annals of Mathematics and Artificial
  Intelligence}} \bibinfo{volume}{25}(\bibinfo{number}{3,4}), pp.
  \bibinfo{pages}{241--273}, \doi{10.1023/A:1018930122475}.

\bibitemdeclare{article}{DBLP:journals/tplp/PelovDB07}
\bibitem{DBLP:journals/tplp/PelovDB07}
\bibinfo{author}{Nikolay \surnamestart Pelov\surnameend}, \bibinfo{author}{Marc
  \surnamestart Denecker\surnameend} \& \bibinfo{author}{Maurice \surnamestart
  Bruynooghe\surnameend} (\bibinfo{year}{2007}):
  \emph{\bibinfo{title}{Well-founded and stable semantics of logic programs
  with aggregates}}.
\newblock {\slshape \bibinfo{journal}{Theory Pract. Log. Program.}}
  \bibinfo{volume}{7}(\bibinfo{number}{3}), pp. \bibinfo{pages}{301--353},
  \doi{10.1017/S1471068406002973}.

\bibitemdeclare{inproceedings}{perri2007integrated}
\bibitem{perri2007integrated}
\bibinfo{author}{Simona \surnamestart Perri\surnameend},
  \bibinfo{author}{Francesco \surnamestart Ricca\surnameend},
  \bibinfo{author}{Giorgio \surnamestart Terracina\surnameend},
  \bibinfo{author}{D~\surnamestart Cianni\surnameend} \&
  \bibinfo{author}{P~\surnamestart Veltri\surnameend} (\bibinfo{year}{2007}):
  \emph{\bibinfo{title}{An integrated graphic tool for developing and testing
  DLV programs}}.
\newblock In: {\slshape \bibinfo{booktitle}{Proceedings of the Workshop on
  Software Engineering for Answer Set Programming (SEA’07)}}, pp.
  \bibinfo{pages}{86--100}.

\bibitemdeclare{article}{pontelli2009justifications}
\bibitem{pontelli2009justifications}
\bibinfo{author}{Enrico \surnamestart Pontelli\surnameend},
  \bibinfo{author}{Tran~Cao \surnamestart Son\surnameend} \&
  \bibinfo{author}{Omar \surnamestart Elkhatib\surnameend}
  (\bibinfo{year}{2009}): \emph{\bibinfo{title}{Justifications for logic
  programs under answer set semantics}}.
\newblock {\slshape \bibinfo{journal}{Theory and Practice of Logic
  Programming}} \bibinfo{volume}{9}(\bibinfo{number}{1}), pp.
  \bibinfo{pages}{1--56}, \doi{10.1017/S1471068408003633}.

\bibitemdeclare{article}{schulz2016justifying}
\bibitem{schulz2016justifying}
\bibinfo{author}{Claudia \surnamestart Schulz\surnameend} \&
  \bibinfo{author}{Francesca \surnamestart Toni\surnameend}
  (\bibinfo{year}{2016}): \emph{\bibinfo{title}{Justifying answer sets using
  argumentation}}.
\newblock {\slshape \bibinfo{journal}{Theory and Practice of Logic
  Programming}} \bibinfo{volume}{16}(\bibinfo{number}{1}), pp.
  \bibinfo{pages}{59--110}, \doi{10.1017/S1471068414000702}.

\bibitemdeclare{article}{trieu2021exp}
\bibitem{trieu2021exp}
\bibinfo{author}{Ly~Ly \surnamestart Trieu\surnameend},
  \bibinfo{author}{Tran~Cao \surnamestart Son\surnameend} \&
  \bibinfo{author}{Marcello \surnamestart Balduccini\surnameend}
  (\bibinfo{year}{2021}): \emph{\bibinfo{title}{exp(aspc): explaining asp
  programs with choice atoms and constraint rules}}.
\newblock {\slshape \bibinfo{journal}{Electronic Proceedings in Theoretical
  Computer Science}}, \doi{10.4204/EPTCS.345.28}.

\bibitemdeclare{inproceedings}{trieu2022explanation}
\bibitem{trieu2022explanation}
\bibinfo{author}{Ly~Ly \surnamestart Trieu\surnameend},
  \bibinfo{author}{Tran~Cao \surnamestart Son\surnameend} \&
  \bibinfo{author}{Marcello \surnamestart Balduccini\surnameend}
  (\bibinfo{year}{2022}): \emph{\bibinfo{title}{xASP: An Explanation Generation
  System for Answer Set Programming}}.
\newblock In: {\slshape \bibinfo{booktitle}{International Conference on Logic
  Programming and Nonmonotonic Reasoning}}, \bibinfo{organization}{Springer},
  pp. \bibinfo{pages}{363--369}, \doi{10.1007/978-3-031-15707-3\_28}.

\bibitemdeclare{inproceedings}{viegas2013justifications}
\bibitem{viegas2013justifications}
\bibinfo{author}{Carlos \surnamestart Viegas~Dam{\'a}sio\surnameend},
  \bibinfo{author}{Anastasia \surnamestart Analyti\surnameend} \&
  \bibinfo{author}{Grigoris \surnamestart Antoniou\surnameend}
  (\bibinfo{year}{2013}): \emph{\bibinfo{title}{Justifications for logic
  programming}}.
\newblock In: {\slshape \bibinfo{booktitle}{Logic Programming and Nonmonotonic
  Reasoning: 12th International Conference, LPNMR 2013, Corunna, Spain,
  September 15-19, 2013. Proc.~ 12}}, \bibinfo{organization}{Springer}, pp.
  \bibinfo{pages}{530--542}, \doi{10.1007/978-3-642-40564-8\_53}.

\end{thebibliography}


\end{document}